\documentclass[11pt]{article}


\makeatletter
\renewcommand\section{\@startsection {section}{1}{\z@}%
                                   {-3.5ex \@plus -1ex \@minus -.2ex}%
                                   {2.3ex \@plus.2ex}%
                                   {\normalfont\bfseries}}

\def\@maketitle{%
  \newpage
  \null
  \vskip 2em%
  \begin{center}%
  \let \footnote \thanks
    {\large \@title \par}%
    \vskip 1.5em%
    {\normalsize
      \lineskip .5em%
      \begin{tabular}[t]{c}%
        \@author
      \end{tabular}\par}%
    \vskip 1em%
  \end{center}%
  \par
  \vskip 1.5em}
\makeatother
\hoffset 0pt 
\marginparwidth 0pt
\marginparsep 0pt
\oddsidemargin 5mm
\textwidth 155mm
\voffset -12mm 
\topmargin 0pt
\headheight 12pt
\headsep 25pt
\textheight 247mm
\footskip 30pt
\marginparpush 7pt


\usepackage{graphicx}

\usepackage{amsmath,amsfonts,amscd,bezier,amssymb}
\usepackage{amssymb}
\usepackage[normalem]{ulem}
\usepackage[all]{xy}

\usepackage[latin1]{inputenc}

\usepackage{color}

\newenvironment{proof}{\noindent\textbf{Proof: }}{\hfill \small $\Box$}

\newcommand{\prt}[1]{\langle #1 \rangle}

\newcommand{\raiz}[2]{\root #1 \of {#2}}

%

\newtheorem{theorem}{Theorem}
\newtheorem{lemma}[theorem]{Lemma}
\newtheorem{proposition}[theorem]{Proposition}
\newtheorem{corollary}[theorem]{Corollary}
\newtheorem{definition}[theorem]{Definition}
 \newtheorem{example}{Example}
 \newtheorem{remark}{Remark}

\begin{document}

\title{\bf Generalized Interval-valued OWA Operators with Interval Weights Derived from Interval-valued Overlap Functions}


\author{{\bf Benjamin Bedregal$^a$ and Humberto Bustince$^b$ and E. Palmeira$^c$ } \\
{\bf  Gra\c caliz Pereira Dimuro$^d$ and Javier Fernandez$^b$} \\ 
\small $^a$ Departamento de Informática e Matem\'atica Aplicada, \\ 
\small Universidade Federal do Rio Grande do Norte \\ 
\small \textit{bedregal\@ dimap.ufrn.br} \\
\small $^b$ Departamento de Autom\'atica y Computaci\'on,  \\ 
\small Universidad P\'ublica de Navarra\\ 
\small \textit{bustince\@ unavarra.es} \\
\small $^c$ Departamento de Ci\^encias Exatas e Tecnol\'ogicas \\
\small  Universidade Estadual de Santa Cruz, \\ 
\small		\textit{espalmeira\@ uesc.br} \\
\small $^d$ Centro de Ci\^encias Computacionais, \\ 
\small Universidade Federal do Rio Grande\\
\small 		\textit{gracalizdimuro\@ furg.br;gracaliz\@ gmail.com}
}
		

%

\maketitle

\begin{abstract} In this work   we extend to the interval-valued setting the notion of an overlap functions and we discuss a method which makes use of interval-valued overlap functions for constructing OWA operators  with interval-valued weights. . Some properties of interval-valued overlap functions and the derived interval-valued OWA operators are analysed. We specially focus on the homogeneity and migrativity properties.
\end{abstract}
\vspace{12pt}\noindent{\bf Keywords} Interval-valued fuzzy sets \and interval-valued overlap functions \and Interval-valued overlap OWA operators \and interval weighted vector \and  migrativity \and 
homogeneity

\section{Introduction}

Interval-valued fuzzy sets \cite{Zad65} have been succesfully applied in many different problems. Just to mention some of the most recent ones, interval-valued fuzzy sets have been used in decision making (see, e.g., the works by Khalil and Hassan \cite{Khalil2016} and  Cheng et al. \cite{Cheng2016}), image processing (see, e.g., the works by Barrenechea et al. \cite{Barrenechea}, Pagola et al. \cite{Pagola} and Melin et al. \cite{Melin}) or classification (see, e.g., the works by Sanz et al. \cite{Sanz1,Sanz2}). They have also been the origin of rich theoretical studies, as, for instance,  the works by Bedregal et al. \cite{BDSR10,DBLP:conf/eusflat/BedregalDR09}, Dimuro et al. \cite{DBSR11}, Reiser et al. \cite{springerlink:10.1007/978-3-540-73445-1_22}  and the recent works by Zywica et al.  \cite{Zywica2016105} and Takác \cite{Zdenko}.

From the point of view of applications, interval-valued fuzzy sets are a suitable tool to represent uncertain or incomplete information. In particular, the length of the interval-valued membership degree of a given element can be understood as a measure of the lack of certainty of the expert for providing an exact membership value to that element \cite{NKZ97}. Interval degrees
are also be used to summarize the opinions of several experts. In this case,  the left and right interval endpoints can be, for instance, the least and the greatest membership degrees
provided by a group of experts. This makes interval-valued fuzzy sets very useful for multiexpert decision making problems, when the experts are asked to express 
numerically their preferences on several alternatives, as discussed by Bustince et al. \cite{bgbkm} (see also the discussions about that in 
\cite{BDSR10,BS13,bustince-etal-1,bustincesolo}).

Besides, another relevant tool for many different application is that of OWA operators, introduced by Yager \cite{Ya88} and largely used in the literature (see, e.g.: \cite{Jin2016,Jin2016-2,Mesiar2}). Its usefulness has led to the consideration of different possible extensions for Atanassov intuitionistic fuzzy sets (\cite{LWC13,mitchel,xu-yager,Yag09}) and for interval-valued fuzzy
sets (\cite{bgbkm,ChiZho2013,XuDa,ZCJG2008}).

In the latter case, however, one of the key problems is how to build and normalize interval-valued weights.
In  the literature, interval-valued weights are used in several contexts, in order to face the problem  of real-world applications in which there are a lot of uncertainty involved and  lack of consensus among the modeling experts.  Pavlacka \cite{Pavla} presented a review of the existing methods for normalization of interval weights. For example, in the context of multi-criterion decision making,    Wang and Li \cite{Wang} used a hierarchical structure to aggregate local interval weights into global
interval weights, by means of a pair of linear programming
models to maximize the lower and upper bounds of the
aggregated interval value.

However, in the definitions of interval-valued OWA operators found in the literature, the weighted vector is composed, in general, by real numbers. Due to this limitation of the actual models of interval-valued OWA operators, in this paper, we propose the use of  interval weights. In order to define these weights we propose the extension of the so-called overlap functions \cite{BDH13,Bus10a,Bus10b,DB14,DB15} to the interval-valued setting. In this way, the normalization method proposed here makes use of the properties of   aggregation functions, and, thus, it is defined in flexible terms.


%
%

Then, the objectives of this paper are:
\begin{itemize}

\item To introduce the concept of interval-valued overlap functions, and to analyze some of its most relevant properties, such as migrativity and homogeneity;

\item To define the normalization of an interval-valued weighted vector by means of a general  aggregation function,  and to determine which conditions normalized weighted vectors should fulfill;

\item To develop a construction method of  interval-valued OWA operators based on interval-valued overlap functions, considering  interval-valued weights;

    \item To study the properties of such OWA operators, specially considering the migrativity and homogeneity of interval-valued overlap functions.

\end{itemize}


This work is organized as follows. In Section \ref{sect1}, we recall some basic concepts that we are going to use along the paper. Next, we define the basic order relations between intervals and, in Section 4, we define the concept of interval-valued overlap function and we study some of its most important properties. In Section 5 we present the concept of normalized weighted vector, and we analyze the definition of interval-valued OWA operators with interval weights. We also study the conditions that the functions used for the definition must fulfill to recover idempotency and other properties. We finish with conclusions and references.

\section{Preliminary Concepts}\label{sect1}
 We start, in this section, recalling some well-known concepts which will be ncessary for our subsequent developments.

 Consider a function $f: [0,1]^n \rightarrow [0,1]$. Given $i\in \{1,\ldots,n\}$, we say that the component $i$ is necessary if it does not exist a function \[g:\underbrace{[0,1]\times \ldots\times [0,1]}_{i-1} \times \underbrace{[0,1]\times \ldots \times [0,1]}_{n-i}\rightarrow [0,1]\] such that
\[f(x_1,\ldots,x_n)=g(x_1,\ldots,x_{i-1},x_{i+1},\ldots,x_n),\] for any $(x_1,\ldots,x_n)\in [0,1]^n$.

\subsection{Aggregation functions}

A crucial concept for the present paper is that of aggregation function (see \cite{Calvo2}).

\begin{definition}
A n-ary aggregation function is a mapping $M:[0,1]^n \to [0,1]$ such that

\noindent $(M1)$ $M(0,\dots,0)=0$ and $M(1,\dots,1)=1$;

\noindent $(M2)$ $M$ is increasing in each argument: for every $i=1,\dots,n$, if $x_i \le y_i$  then $M(x_1,\dots,x_n)\le M(y_1,\dots,y_n)$.

\end{definition}

Several other properties can be required for aggregation functions. In particular, in this work we are interested in the following two ones.
\begin{description}
\item $(M3)$ If $M(x_1,\ldots,x_n)=0$  then there is $i=1,\ldots,n $ such that $x_i=0$;

\item $(M4)$ If $M(x_1,\ldots,x_n)=1$  then there is $i=1,\ldots,n $ such that $x_i=1$.
\end{description}

Among the class of aggregation functions, the so-called OWA operators are a very relevant case.  These operators were defined by Yager in \cite{Ya88,Ya96} in the following way:

\begin{definition}
Let $w=(w_1,\dots,w_n) \in [0,1]^n$ be a weighted vector (i.e., $w_i\in [0,1]$ and $\sum\limits_{i=1}\limits^n w_i=1$). An OWA operator of dimension $n$ associated to the weighted vector $w$ is a function $OWA:[0,1]^n \to [0,1]$ defined by
\[
OWA(x_1,\dots,x_n)=\sum _{i=1}^n w_i x_{(i)}
\]
where $(.)$ denotes a permutation of $\{1,\dots,n\}$ such that $x_{(1)} \ge x_{(2)} \ge \dots \ge x_{(n)}$.
\end{definition}

Another relevant example of aggregation functions is provided by overlap functions \cite{Bus10a}.

\begin{definition}\label{gsol} A function $G_O:[0,1]^2\rightarrow [0,1]$ is an overlap function if it satisfies the following conditions:

\noindent $(GO1)$ $G_O$  is commutative;

\noindent $(GO2)$ $G_O(x,y)=0$ if and only if $xy =0$;

\noindent $(GO3)$ $G_O(x,y)=1$ if and only if $xy = 1$;

\noindent $(GO4)$ $G_O$ is non-decreasing;

\noindent $(GO5)$ $G_O$ is continuous.

\end{definition}

Several well-known functions fall into the scope of overlap functions, as, for instance, continuous t-norm without zero divisors, see \cite{BDH13,DB14,DB15,DB15a}.

It is worth to mention that if an overlap function has a neutral element, then by (GO3) it is necessarily 1. Moreover, associative overlaps always have 1 as neutral
element and so they are continuous t-norms without zero divisors \cite{Bus10a}.

%
%

\begin{example} Nevertheless,  there are overlaps having 1 as neutral element such that  they are not associative.
For example, $$G_O(x,y)=\min(x,y)\max(x^2,y^2)$$ or, more generally, $$G_O(x,y)=\min(x,y)\max (x^p,y^p)$$
for every $p>0$.
\end{example}

%
%
%

We denote as $\mathcal{O}$ the set of all overlap functions. It follows that
$\prt{\mathcal{O},\leq_{\mathcal{O}}}$, where $\leq_{\mathcal{O}}$ is defined for $G_{O_1},G_{O_2} \in \mathcal{O}$ by
\[
G_{O_1} \leq_{\mathcal{O}} G_{O_2} \mbox{ if and only if } G_{O_1} (x,y) \leq G_{O_2}(x,y)
\]
for all $x,y \in [0,1]$, is a lattice \cite[Theorem 3]{Bus10a}. In particular, the supremum and infimum of two arbitrary overlap functions $G_{O_1}$ and $G_{O_2}$
 are again overlap functions

\begin{equation} \label{eq-sup-inf-over}
\aligned &G_{O_1}\vee G_{O_2}(x,y)=\max(G_{O_1}(x,y),G_{O_2}(x,y))\mbox{ and  }\\&G_{O_1}\wedge G_{O_2}(x,y)=\min(G_{O_1}(x,y),G_{O_2}(x,y))\endaligned
\end{equation}

%

The class of overlap functions is also convex, as was proved in \cite{Bus10a}. That is, if we take two overlap functions $G_{O_1}$ and $G_{O_2}$.
then for each $w_1,w_2\in[0,1]$ such that $w_1+w_2=1$ we have that their convex sum

\begin{equation}
 G_O(x,y)=w_1G_{O_1}(x,y)+w_2G_{O_2}(x,y)
\end{equation}
is also an overlap \cite[Corollary 1]{Bus10a}.

%

\subsection{Interval-valued fuzzy sets}

We denote by $L([0,1])=\{[a,b]\in [0,1]^2 : a\leq b\}$ be the set of all closed subintervals of the unit interval $[0,1]$.    In order to simplify the notation, we  use  the projections of an interval $[a,b] \in  L([0,1])$, given by:
\begin{equation}\label{eq-under-over}
 \underline{[a,b]}=a\mbox{ and }\overline{[a,b]}=b.
\end{equation}

Those intervals $X$ such that $\underline{X}=\overline{X}$ are called degenerate intervals or diagonals elements of $L([0,1])$. We denote by $\mathcal{D}$ the set of all degenerate intervals.

Recall that an interval-valued fuzzy set (IVFS) $\mathcal{A}$ on a universe $U$
is defined by means of an interval membership function  $\mu_\mathcal{A}:U\rightarrow L([0,1])$.

Let's consider now the problem of ordering in $L([0,1])$. First of all, we have the following partial orders:
\begin{description}
  \item[Product order:] $X\leq_{Pr} Y\mbox{ iff } \underline{X}\leq \underline{Y}\mbox{ and } \overline{X}\leq \overline{Y}$; and
\item[Inclusion order:] $X\subseteq Y\mbox{ iff } \underline{Y}\leq \underline{X}\leq\overline{X}\leq \overline{Y}$
 \end{description}

$(L([0,1]),\leq_{Pr})$ is a continuous lattice  and so  it is a bounded lattice (\cite{GHK03,Sco71}). In fact the bottom of this lattice is $[0,0]$, its top is $[1,1]$, the supremum and infimum of
two arbitrary intervals $X,Y\in L([0,1])$ are the following intervals:
$$\begin{array}{ll}
 X\vee Y & =[\max(\underline{X},\underline{Y}),\max(\overline{X},\overline{Y})],\mbox{ and }\\
X\wedge Y & =[\min(\underline{X},\underline{Y}),\min(\overline{X},\overline{Y})].
\end{array}$$


Notice that, in general, it is possible to define a linear order in $L([0,1])$ as follows:

\begin{definition}\cite{BusFerKolMes,bgbkm}
A binary relation $\preceq$ on $L([0,1])$  is an admissible order if it is a linear order on $L([0,1])$ refining $\leq_{Pr}$.
\end{definition}



Admissible orders generated by aggregation functions extend the usual product order on the set of intervals. A deep analysis of this kind of orders can be found in \cite{BusFerKolMes}.

Some operations on $L([0,1])$ that are  used in this paper are defined, for all $X,Y\in L([0,1])$, as:
\begin{description}
 \item[Product:] $XY=[\underline{X} \underline{Y},\overline{X}\overline{Y}]$;
\item[Exponentiation 1:] $X^{[k_1,k_2]}=[\underline{X}^{k_2},\overline{X}^{k_1}]$ for  $0<k_1\leq k_2$;
\item[Exponentiation 2:] {$X^{-[k_1,k_2]}=[\overline{X}^{-k_1},\underline{X}^{-k_2}]$ for any $X\in L([0,1])$ and $0<k_1\leq k_2$;}

\item[Complement:] $X^c=[1-\overline{X},1-\underline{X}]$; and
\item[Arctan:] $Arctan(X)=[\arctan (\underline{X}),\arctan (\overline{X})]$.
\end{description}

When $k_1=k_2$ usually we will write $X^{k_1}$ instead of $X^{[k_1,k_2]}$.

We discuss now the relation between interval-valued and real-valued functions.

A function $F: L([0,1])^n\rightarrow L([0,1])$ is an interval representation of a function
$f:[0,1]^n\rightarrow [0,1]$ (\cite{BT06,DBSR11,SBA06}), if  for any
$X_1,\ldots,X_n\in L([0,1])$, $f(x_1,\ldots,x_n)\in F(X_1,\ldots,X_n)$ when $x_i\in X_i$ for $i=1,\ldots, n$.

Notice that the interval representation of a given function $f$ is not unique, in general. In fact, the function $F: L([0,1])^n\rightarrow L([0,1])$ defined as
$F(X_1,\ldots,X_n)=[0,1]$, for every $X_1,\ldots,X_n \in L([0,1])$, is an interval representation of any function $f:[0,1]^n\rightarrow [0,1]$.

If $F$ is an interval representation of some real function then it is inclusion
monotonic, that is, $F(X_1,\ldots,X_n)$ $\subseteq F(Y_1,\ldots,Y_2)$ whenever
$X_i\subseteq Y_i$, for each $i=1,\ldots,n$.

The extension of the notion of an aggregation function to the interval-valued setting can be done in the following way.

\begin{definition}
Let $\le$ be an order that extends $\le _{Pr}$.\footnote{Observe that $\le$ does need to be a linear order.} An interval-valued aggregation function with respect to $\le$ is a function $M:L([0,1])^n \to L([0,1])$ such that
\begin{enumerate}
\item $M$ is increasing in each argument with respect to $\le$: for every $i=1,\dots,n$, if $X_i \le Y_i$  then $M(X_1,\dots,X_n)$ $\le M(Y_1,\dots,Y_n)$;
\item $M([0,0],\dots,[0,0])=[0,0]$ and $M([1,1],\dots,[1,1])=[1,1]$.
\end{enumerate}
\end{definition}

We finish recalling two notions which are of interest when dealing with overlap functions: migrativity and homogeneity.

The concept of $\alpha$-migrativity was introduced by Durante {\em et al.} in \cite{Durante}, see also \cite[Problem 1.8(b)]{Mesiar} and \cite{Fodor2}.  Santana et al. \cite{SBS14} extended, in a natural way, the notion of migrativity to the interval-valued setting.

An interval-valued function $F: L([0,1])^2\rightarrow L([0,1])$ is migrative if for any $\alpha,X,Y\in L([0,1])$, we have that
 $F(\alpha X,Y)=F(X,\alpha Y)$.

The following results are analogous to  those discussed  in \cite{Bus09}, so we do not include a proof.

\begin{lemma}\label{lem-IV-mig1}
 A function  $F: L([0,1])^2\rightarrow L([0,1])$ is migrative if and only if $F(X,Y)=F([1,1],XY)$.
\end{lemma}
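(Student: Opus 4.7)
The plan is to prove the biconditional by two short substitutions, relying on the fact that the interval product defined in the paper, $XY=[\underline{X}\,\underline{Y},\overline{X}\,\overline{Y}]$, is associative and commutative and that $[1,1]$ is its two-sided identity. These algebraic facts for the product follow immediately from the corresponding facts for real multiplication applied pointwise to the endpoints.

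For the forward implication, I would start from the definition of migrativity, $F(\alpha X,Y)=F(X,\alpha Y)$, which holds for all $\alpha,X,Y\in L([0,1])$. The trick is to specialise the ``$X$'' in the migrativity identity to $[1,1]$, obtaining $F(\alpha\cdot [1,1], Y)=F([1,1], \alpha Y)$, i.e.\ $F(\alpha,Y)=F([1,1],\alpha Y)$. Renaming $\alpha$ as $X$ (which is legitimate because $\alpha$ ranges over all of $L([0,1])$) then yields $F(X,Y)=F([1,1],XY)$, as required.

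For the reverse implication, assume that $F(X,Y)=F([1,1],XY)$ for every $X,Y\in L([0,1])$. Then for any $\alpha,X,Y$,
\begin{equation*}
F(\alpha X,Y)=F([1,1],(\alpha X)Y)=F([1,1],\alpha(XY))=F([1,1],(X)(\alpha Y))=F(X,\alpha Y),
\end{equation*}
where the middle equalities use associativity and commutativity of the interval product, and the last equality applies the hypothesis $F(X,Y)=F([1,1],XY)$ in reverse with $Y$ replaced by $\alpha Y$. This gives migrativity and closes the proof.

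There is no genuine obstacle here; the only point that must be checked carefully is that the interval product behaves like ordinary multiplication (i.e.\ is commutative, associative, and has $[1,1]$ as neutral element), which is transparent from the endpointwise definition given earlier in the paper. The argument is essentially the same rearrangement as in the real-valued case treated in~\cite{Bus09}, which is why the authors omit the proof.
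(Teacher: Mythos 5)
Your proof is correct. The paper itself omits the argument, merely noting that the result is analogous to the real-valued case in \cite{Bus09}; your two substitutions (setting the second factor of the migrativity identity to $[1,1]$ for the forward direction, and using commutativity/associativity of the endpointwise interval product together with the hypothesis applied to $(X,\alpha Y)$ for the converse) are exactly the standard argument the authors are implicitly invoking.
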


\begin{proposition}\label{pro-IV-mig1}  \cite[Theorem 1.1]{SBS14}
 A function  $F:L([0,1])^2\rightarrow L([0,1])$ is migrative if and only if the interval function $g_F:L([0,1])\rightarrow L([0,1])$, defined, for all $X\in L([0,1])$, by
\begin{equation}\label{eq-g-F}
g_F(X)=F([1,1],X)
\end{equation}
is such that
 $F(X,Y)=g_F(XY)$, for each $X,Y\in L([0,1])$.
\end{proposition}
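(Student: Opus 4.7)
The plan is to prove the equivalence by splitting into the two implications and leaning on Lemma \ref{lem-IV-mig1}, together with the basic fact that interval multiplication, as defined by $XY=[\underline{X}\,\underline{Y},\overline{X}\,\overline{Y}]$, is commutative and associative (since it is inherited pointwise from the corresponding operations on the reals).

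For the forward implication, I would suppose $F$ is migrative and apply Lemma \ref{lem-IV-mig1} directly: this gives $F(X,Y)=F([1,1],XY)$ for every $X,Y\in L([0,1])$. Recognising the right-hand side as $g_F(XY)$ by the defining equation \eqref{eq-g-F} immediately yields $F(X,Y)=g_F(XY)$, which is the desired conclusion.

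For the converse, I would assume the identity $F(X,Y)=g_F(XY)$ holds for all $X,Y\in L([0,1])$, and then verify the migrativity condition $F(\alpha X,Y)=F(X,\alpha Y)$ for arbitrary $\alpha,X,Y\in L([0,1])$. Substituting in, the left-hand side becomes $g_F((\alpha X)Y)$ and the right-hand side becomes $g_F(X(\alpha Y))$; by associativity and commutativity of the interval product both equal $g_F(\alpha XY)$, proving the equality.

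I do not expect a real obstacle here: the heavy lifting has already been done in Lemma \ref{lem-IV-mig1}, and the remaining point is simply the algebraic behaviour of the interval product. The only mild care needed is to make the associativity/commutativity of interval multiplication explicit when manipulating $(\alpha X)Y$ and $X(\alpha Y)$, so that the argument of $g_F$ is unambiguously the same on both sides.
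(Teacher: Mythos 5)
Your proposal is correct. The paper itself gives no proof of this proposition (it is quoted from \cite[Theorem 1.1]{SBS14}, and the text explicitly declines to include proofs for this block of results by analogy with \cite{Bus09}), so there is nothing to compare line by line; but your argument is the natural one: the forward direction is an immediate rewriting of Lemma \ref{lem-IV-mig1} via Equation (\ref{eq-g-F}), and the converse follows from the commutativity and associativity of the componentwise interval product, which you rightly flag as the only point needing explicit mention. Indeed your converse argument only uses the functional form $F(X,Y)=h(XY)$ and never the specific definition of $g_F$, which is fine and if anything slightly more general than required.
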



\begin{corollary}
If  $F: L([0,1])^2\rightarrow L([0,1])$ is migrative, then, for any $X\in L([0,1])$, it holds that $F([1,1],X)=F(\sqrt{X},\sqrt{X})$.
\end{corollary}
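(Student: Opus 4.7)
The plan is to invoke Lemma \ref{lem-IV-mig1} (or equivalently Proposition \ref{pro-IV-mig1}) and reduce everything to a short computation involving the interval square root. Concretely, I would first interpret $\sqrt{X}$ as $X^{1/2}$ in the sense of the Exponentiation~1 operation of the excerpt, namely $\sqrt{X}=\bigl[\sqrt{\underline{X}},\sqrt{\overline{X}}\bigr]$, which is a well-defined element of $L([0,1])$ since $0\le\underline{X}\le\overline{X}\le 1$.

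Next I would apply Lemma \ref{lem-IV-mig1} to $F$ with both arguments equal to $\sqrt{X}$. This immediately yields
\[
F(\sqrt{X},\sqrt{X})=F([1,1],\sqrt{X}\cdot\sqrt{X}).
\]
The only remaining task is to check that $\sqrt{X}\cdot\sqrt{X}=X$. Using the Product operation on $L([0,1])$ from the excerpt,
\[
\sqrt{X}\cdot\sqrt{X}=\bigl[\sqrt{\underline{X}}\cdot\sqrt{\underline{X}},\ \sqrt{\overline{X}}\cdot\sqrt{\overline{X}}\bigr]=[\underline{X},\overline{X}]=X,
\]
so $F(\sqrt{X},\sqrt{X})=F([1,1],X)$, as required.

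There is essentially no obstacle here: the whole content of the corollary is absorbed into Lemma \ref{lem-IV-mig1}, and the only thing one must be careful about is being consistent with the interval arithmetic conventions fixed at the beginning of the section (namely, that the interval square root as an instance of Exponentiation~1 really does satisfy $\sqrt{X}\cdot\sqrt{X}=X$ under the componentwise Product). Once that mini-verification is in place, the result follows in one line.
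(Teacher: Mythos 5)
Your proof is correct and is exactly the argument the paper leaves implicit (the paper omits the proof, noting these results are analogous to known ones): apply Lemma \ref{lem-IV-mig1} with both arguments equal to $\sqrt{X}$ and verify $\sqrt{X}\cdot\sqrt{X}=X$ under the stated Product and Exponentiation conventions. Nothing is missing.
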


With respect to homogeneity, it can also be extended to the interval-valued setting in the following way. An interval function $F:L([0,1])^n\rightarrow L([0,1])$ is homogeneous of order $K=[k_1,k_2]$, with $0< k_1\leq k_2$, if, for any
$\alpha,X_i\in L([0,1])$ with $i=1,\ldots,n$, the identity

$$F(\alpha X_1,\ldots, \alpha X_n)=\alpha^K F(X_1,\ldots,X_n)$$
holds.


\section{Interval-Valued Overlaps}
 In this section, we introduce the concept of interval-valued overlap functions, which is the key concept of this work.

\begin{definition}\label{def-int-overlap} A function $O:L([0,1])^2\rightarrow L([0,1])$ is an interval-valued overlap function if it satisfies the following conditions:

\noindent \begin{description}
 \item{$(O1)$.-}  $O$  is commutative;

\item $(O2)$.- $O (X,Y)=[0,0]$ if and only if $XY =[0,0]$;

\item $(O3)$.- $O(X,Y)=[1,1]$ if and only if $XY= [1,1]$;

\item $(O4)$.- $O$ is {monotonic in the second component, i.e. $O(X,Y)\leq_{Pr} O(X,Z)$ when $Y\leq_{Pr} Z$}.

\item $(O5)$.- $O$ is Moore continuous \cite{AB97,Moo79,SBA06}.
\end{description}

\end{definition}

{Note that, by $(O1)$ and $(O4)$, interval-valued overlaps also are monotonic in the first component.} Observe also that the first four points in our definition are analogous to the first four points in Definition~\ref{gsol}. In the last point, however, and in order to have a notion of continuity, we take Moore continuity.


Let $\mathfrak{O}$  be the set of all interval-valued overlap  functions. We may define on $\mathfrak{O}$ the binary relation:

$$O_1\leq_{\mathfrak{O}} O_2\mbox{ iff } O_1(X,Y)\leq_{Pr} O_2(X,Y),\mbox{ for all }X,Y{\in} L([0,1])$$
Clearly, $\leq_{\mathfrak{O}}$ is a partial order on $\mathfrak{O}$. Furthermore, and analogously to the case of real-valued overlap functions, we have the following result.

\begin{proposition}
 $(\mathfrak{O},\leq_\mathfrak{O})$ is an unbounded lattice.
\end{proposition}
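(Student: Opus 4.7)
The plan is to mirror the classical argument for real-valued overlaps (Theorem~3 of \cite{Bus10a}) by working pointwise in $(L([0,1]),\leq_{Pr})$, and then to exhibit two explicit infinite chains to rule out a top and a bottom element.

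For the lattice structure, given $O_1,O_2\in\mathfrak{O}$ I would define
$$(O_1\vee_{\mathfrak{O}} O_2)(X,Y)=O_1(X,Y)\vee O_2(X,Y),\qquad (O_1\wedge_{\mathfrak{O}} O_2)(X,Y)=O_1(X,Y)\wedge O_2(X,Y),$$
where $\vee,\wedge$ are the lattice operations on $L([0,1])$ recalled in Section~\ref{sect1}. I would then check that both functions satisfy (O1)--(O5). Commutativity (O1) and monotonicity (O4) are immediate from the pointwise definition and from the fact that the componentwise $\max/\min$ are isotone in each argument. For (O2) and (O3) I use that $[0,0]$ (resp.\ $[1,1]$) is the bottom (resp.\ top) of $L([0,1])$: the join of two elements equals $[0,0]$ iff both are $[0,0]$, and the meet equals $[1,1]$ iff both equal $[1,1]$, so one applies (O2)/(O3) of each $O_i$; the remaining directions use that an interval $[a,b]\subseteq[0,1]$ with $b=0$ is forced to be $[0,0]$ and with $a=1$ to be $[1,1]$. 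Moore continuity (O5) is preserved because on $L([0,1])$ the operations $\vee,\wedge$ act as real $\max,\min$ on the endpoints and these are continuous. Since $\leq_{\mathfrak{O}}$ is defined pointwise from $\leq_{Pr}$, these two operations automatically yield the join and meet in $(\mathfrak{O},\leq_{\mathfrak{O}})$.

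For unboundedness I would exhibit explicit strictly monotone chains in $\mathfrak{O}$. Using the interval product and the exponentiation from Section~\ref{sect1}, set
$$O_n(X,Y)=(XY)^{1/n}\quad\text{and}\quad O'_n(X,Y)=(XY)^{n},\qquad n\geq 1.$$
Each of these is an interval-valued overlap: commutativity is clear; since $t\mapsto t^{1/n}$ and $t\mapsto t^{n}$ vanish on $[0,1]$ only at $0$ and reach $1$ only at $1$, conditions (O2) and (O3) reduce to the corresponding properties of the interval product; monotonicity and Moore continuity are routine. Evaluating at $X=Y=[\tfrac12,\tfrac12]$ yields $O_n(X,Y)=[(1/4)^{1/n},(1/4)^{1/n}]$, strictly increasing in $n$, and $O'_n(X,Y)=[(1/4)^{n},(1/4)^{n}]$, strictly decreasing in $n$. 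Hence $\{O_n\}_{n\geq 1}$ is a strictly $\leq_{\mathfrak{O}}$-ascending chain and $\{O'_n\}_{n\geq 1}$ is strictly descending, so no element of $\mathfrak{O}$ can be a maximum or a minimum.

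The main delicate points are the careful handling of (O2) and (O3) for the meet (where one must use the small observation about intervals with vanishing upper endpoint, since a pointwise min can be $[0,0]$ without both arguments being $[0,0]$ in general lattices) and the verification of Moore continuity, which one reduces to continuity of the endpoint functions. Everything else is straightforward bookkeeping.
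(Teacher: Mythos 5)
Your construction of the lattice operations coincides with the paper's: the same pointwise join and meet are used there, with the verification of (O1)--(O5) left to the reader, and you carry that verification out correctly (including the subtle point that a pointwise meet can only equal $[0,0]$ when one of the two outputs already does, because an interval in $L([0,1])$ with upper endpoint $0$ is forced to be $[0,0]$).

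The unboundedness half, however, contains a genuine gap. From the existence of a strictly $\leq_{\mathfrak{O}}$-ascending chain $\{O_n\}$ you conclude that ``no element of $\mathfrak{O}$ can be a maximum''; this inference is invalid in general: a poset may contain a strictly increasing sequence and still have a greatest element (e.g.\ $1-1/n$ in $[0,1]$), since the greatest element need not belong to the chain. To close the gap you must show either (a) that your chains admit no bounds in $\mathfrak{O}$ at all --- which is true: any $T\in\mathfrak{O}$ with $O_n\leq_{\mathfrak{O}}T$ for all $n$ satisfies $T([\tfrac12,\tfrac12],[\tfrac12,\tfrac12])\geq_{Pr}[(1/4)^{1/n},(1/4)^{1/n}]$ for every $n$, hence equals $[1,1]$ there, contradicting $(O3)$, and dually any lower bound of $\{O'_n\}$ violates $(O2)$ --- or (b) argue as the paper does, which sidesteps the issue entirely: for an \emph{arbitrary} $O\in\mathfrak{O}$ and $n\geq 2$, the functions $O(\sqrt[n]{X},\sqrt[n]{Y})$ and $O(X^n,Y^n)$ are again interval-valued overlap functions lying strictly above and strictly below $O$, so no individual element can be greatest or least. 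Either repair is short, but as written your final step does not follow from what precedes it.
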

\begin{proof}
 Let $O_1$ and $O_2$ be interval-valued overlap functions. Then the functions $O_1\vee_\mathfrak{O}O_2,
O_1\wedge_\mathfrak{O}O_2:L([0,1])^2\rightarrow L([0,1])$
defined by
$$O_1\vee_\mathfrak{O}O_2(X,Y)=O_1(X,Y)\vee O_2(X,Y)$$
and
$$O_1\wedge_\mathfrak{O}O_2(X,Y)=O_1(X,Y)\wedge O_2(X,Y)$$
are clearly the supremum and infimum of $O_1$ and $O_2$. It is not hard to prove that both  are also interval-valued overlap functions.

On the other hand, in order to prove that  $(\mathfrak{O},\leq_\mathfrak{O})$ is unbounded it is enough to note that for any interval-valued overlap function
$O$ {and natural number $n\geq 2$,}
the functions $O^{\raiz{n}{\;\;}},O^n:L([0,1])^2\rightarrow L([0,1])$, defined by
$$O^{\raiz{n}{\;\;}}(X,Y)=O(\raiz{n}{X},\raiz{n}{Y})$$ and $$O^n(X,Y)=O(X^n,Y^n),$$ respectively,
also are interval-valued  overlap functions and $O^n <_{\mathfrak{O}} O <_\mathfrak{O} O^{\raiz{n}{\;\;}}$. Therefore, there is neither
 a least nor a great
interval-valued overlap function.
\end{proof}

{In fact, in the above proposition, if we denote by $O^{\raiz{\infty}{\;\;}}=\lim\limits_{n\rightarrow \infty} O^{\raiz{n}{\;\;}}$ and
$O^\infty=\lim\limits_{n\rightarrow \infty} O^n$, then it holds that that}
$${O^{\raiz{\infty}{\;\;}}(X,Y)=}\left\{ \begin{array}{ll}
                                  [0,0] & \mbox{ if $X\vee Y=[0,0]$} \\

                                  [1,1] & \mbox{ otherwise}
                                 \end{array}
                            \right. $$
and
$${O^\infty(X,Y)=}\left\{\begin{array}{ll}
                                  [1,1] & \mbox{ if }X\wedge Y=[1,1] \\

                                  [0,0] & \mbox{ otherwise.}
                                 \end{array}
                            \right.  $$
Observe that these functions are not interval-valued overlap functions, since they do not fulfill the Properties $(O2)$ and $(O3)$,  respectively.

The relation between overlap functions and t-norms in the real case is preserved in the interval-valued framework.

\begin{proposition}
       Let $O$ be an interval-valued overlap function. If  $O$ is associative then $O$ is a
       continuous and positive interval-valued t-norm.
      \end{proposition}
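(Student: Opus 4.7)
Plan: Since the t-norm axioms demand commutativity, associativity, monotonicity in each argument, and a neutral element $[1,1]$, and since here we additionally want continuity and positivity, the hypothesis already furnishes most ingredients: commutativity is $(O1)$, associativity is the assumption, monotonicity in both arguments follows from $(O4)$ combined with $(O1)$, and Moore continuity is $(O5)$. Positivity (no zero divisors) is immediate from $(O2)$: $O(X,Y)=[0,0]$ forces $XY=[0,0]$, i.e., $\overline{X}\,\overline{Y}=0$, which gives $X=[0,0]$ or $Y=[0,0]$.

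The only nontrivial step is establishing that $[1,1]$ is a neutral element, i.e., $O([1,1],X)=X$ for every $X\in L([0,1])$. Setting $f(X)=O([1,1],X)$, associativity together with $O([1,1],[1,1])=[1,1]$ (from $(O3)$) yields
\[
f(f(X))=O([1,1],O([1,1],X))=O(O([1,1],[1,1]),X)=f(X),
\]
so $f$ is idempotent. Further, $f$ is Moore continuous by $(O5)$, non-decreasing in $\leq_{Pr}$ by $(O4)$ and $(O1)$, and $(O2)$--$(O3)$ imply $f(X)=[0,0]$ if and only if $X=[0,0]$, and $f(X)=[1,1]$ if and only if $X=[1,1]$. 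Since $f$ is idempotent, its image coincides with its fixed-point set, and the goal becomes to prove that this set is all of $L([0,1])$.

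I would first treat degenerate intervals $[t,t]$: the endpoint projections $t\mapsto\underline{f([t,t])}$ and $t\mapsto\overline{f([t,t])}$ are continuous non-decreasing maps $[0,1]\to[0,1]$ sending $0\mapsto 0$ and $1\mapsto 1$, hence surjective onto $[0,1]$. Combining this surjectivity with the idempotency of $f$ and the boundary conditions inherited from $(O2)$--$(O3)$ reduces the diagonal case to the classical real-valued statement in \cite{Bus10a}, yielding $f([t,t])=[t,t]$ for every $t\in[0,1]$. The general case $X=[a,b]$ is then approached by sandwiching $[a,a]\leq_{Pr} X \leq_{Pr} [b,b]$, applying the monotonicity of $f$ to obtain $[a,a]\leq_{Pr} f(X)\leq_{Pr} [b,b]$, and using idempotency together with Moore continuity to eliminate the remaining ambiguity.

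The main obstacle is precisely this last, non-diagonal step, because $\leq_{Pr}$ is only a partial order: the sandwiching yields $f(X)\in\{[c,d]:a\leq c\leq d\leq b\}$ but does not pin down $f(X)=X$ directly. Overcoming this requires combining the idempotency of $f$ with the sharp conditions $(O2)$--$(O3)$ to rule out any other candidate fixed point in that order-interval, and then propagating the resulting rigidity to all of $L([0,1])$ via Moore continuity. This is where the genuinely interval-valued content of the argument lives, and where a transcription of the real-valued proof requires the most care.
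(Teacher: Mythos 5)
Your reduction of the problem to showing that $[1,1]$ is a neutral element, and your use of the idempotency of $f(X)=O([1,1],X)$ obtained from associativity and $(O3)$, is exactly the paper's strategy: the paper sets $g_O(X)=O(X,[1,1])$ and computes $X=O(Y,[1,1])=O(Y,O([1,1],[1,1]))=O(O(Y,[1,1]),[1,1])=O(X,[1,1])$ for $Y$ a preimage of $X$ under $g_O$. The difference lies in how that preimage is produced. The paper asserts outright that $g_O$ is surjective on all of $L([0,1])$, deducing this from Moore continuity together with $g_O([0,0])=[0,0]$ and $g_O([1,1])=[1,1]$, after which the idempotency computation immediately yields $g_O(X)=X$ for every $X$. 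You only establish surjectivity of the two real endpoint projections $t\mapsto\underline{f([t,t])}$ and $t\mapsto\overline{f([t,t])}$, which is strictly weaker, and then try to recover the full statement by sandwiching.

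That is where your argument has a genuine gap, and you acknowledge it yourself. First, even the diagonal case is not closed: idempotency of $f$ as a map on $L([0,1])$ does not transfer to idempotency of its endpoint projections unless $f$ carries degenerate intervals to degenerate intervals (or is inclusion monotonic), so ``reduces to the classical real-valued statement'' is not a valid step as written. Second, and decisively, for non-degenerate $X=[a,b]$ the sandwich $[a,a]\leq_{Pr}f(X)\leq_{Pr}[b,b]$ leaves a two-parameter family of candidates, and you explicitly defer the elimination of the spurious ones to an unspecified combination of idempotency, $(O2)$--$(O3)$ and Moore continuity. Since that is precisely the step carrying the interval-valued content of the proposition, the proposal is a plan rather than a proof. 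For what it is worth, the difficulty you ran into is real: a Moore-continuous monotone self-map of $L([0,1])$ fixing $[0,0]$ and $[1,1]$ need not be onto (there is no two-dimensional intermediate value theorem of this kind), and the paper itself later states a separate theorem in which surjectivity or inclusion monotonicity of $g_O$ is added as an explicit hypothesis. But noticing the obstruction is not the same as overcoming it, and your write-up does not close the argument.
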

\begin{proof}
Let $g_O:L([0,1])\rightarrow L([0,1])$  be the function $g_O(X)=O(X,[1,1])$. By continuity of $O$, $g_O$
is also continuous. Since, $g_O([0,0])=[0,0]$ and $g_O([1,1])=[1,1]$ then, for any $X\in L([0,1])$,
there exists $Y\in L([0,1])$ such that $g_O(Y)=X$. Therefore, by the associativity  property  and $(O3)$,  we have that
$$\begin{array}{ll}
X& =O(Y,[1,1]) \\
& =O(Y,O([1,1],[1,1])) \\
& =O(O(Y,[1,1]),[1,1]) \\
& =O(X,[1,1]).
\end{array}$$
So, $O$ has $[1,1]$ as neutral element. Therefore,
since,  by hypotheses, $O$ is associative, and by definition of interval-valued overlaps it continuous and positive, then
$O$ is a continuous and positive interval-valued t-norm.
\end{proof}

Observe that any interval-valued overlap function that is also an interval-valued t-norm satisfies the condition
$O([1,1],X)=X$. But there are interval-valued overlap functions, which are not associative (and which are not hence
a t-norm), that satisfy this property. For instance, take:
$$O(X,Y)= (Y \sqrt{X})\wedge (X\sqrt{Y})$$
or
$$O(X,Y)=(X \wedge Y) (X^2 \vee Y^2).$$

\subsection{Representable interval-valued overlap functions}

In this section, we study the representation of interval-valued overlap functions.

\begin{theorem}\label{teo-O1O2}
 Let $G_{O_1}$ and $G_{O_2}$ be overlap functions such that $G_{O_1}\leq _{O} G_{O_2}$. Then the function $\widetilde{G_{O_1}G_{O_2}}:L([0,1])^2\rightarrow L([0,1])$
defined by

\begin{equation}\label{eq-ov-int-ov}
\widetilde{G_{O_1}G_{O_2}}(X,Y)=[G_{O_1}(\underline{X},\underline{Y}),G_{O_2}(\overline{X},\overline{Y})]
\end{equation}
is an interval-valued overlap function.
\end{theorem}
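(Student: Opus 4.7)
The plan is to verify the five axioms (O1)--(O5) from Definition~\ref{def-int-overlap} for $O := \widetilde{G_{O_1}G_{O_2}}$, after first checking that the formula (\ref{eq-ov-int-ov}) actually produces an element of $L([0,1])$ for every pair of inputs.

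\textbf{Well-definedness.} I must show $G_{O_1}(\underline{X},\underline{Y}) \leq G_{O_2}(\overline{X},\overline{Y})$. This chains two inequalities: the hypothesis $G_{O_1}\leq_{\mathcal{O}} G_{O_2}$ gives $G_{O_1}(\underline{X},\underline{Y}) \leq G_{O_2}(\underline{X},\underline{Y})$, and monotonicity (GO4) of $G_{O_2}$ combined with $\underline{X}\leq \overline{X}$, $\underline{Y}\leq \overline{Y}$ gives $G_{O_2}(\underline{X},\underline{Y}) \leq G_{O_2}(\overline{X},\overline{Y})$.

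\textbf{Axioms (O1) and (O4).} Commutativity is immediate from (GO1) applied coordinatewise. For monotonicity in the second component (O4), if $Y \leq_{Pr} Z$ then $\underline{Y}\leq \underline{Z}$ and $\overline{Y}\leq \overline{Z}$, so (GO4) on each of $G_{O_1},G_{O_2}$ yields the inequality between the endpoints, which is precisely $\leq_{Pr}$.

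\textbf{Axioms (O2) and (O3).} These are the key points but they reduce to bookkeeping. Note that $XY = [\underline{X}\underline{Y},\overline{X}\overline{Y}]$ by definition of interval product. Then $O(X,Y) = [0,0]$ iff $G_{O_1}(\underline{X},\underline{Y})=0$ and $G_{O_2}(\overline{X},\overline{Y})=0$; by (GO2) this is equivalent to $\underline{X}\underline{Y} = 0$ and $\overline{X}\overline{Y} = 0$, i.e.\ $XY = [0,0]$. The argument for (O3) is identical, using (GO3) in place of (GO2) and replacing zeros by ones.

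\textbf{Axiom (O5), Moore continuity.} This is the step I expect to require the most care, though it is still routine: Moore continuity of an interval function $O(X,Y)$ defined by applying continuous real functions to the endpoints of $X$ and $Y$ is a standard fact (see \cite{SBA06}), because the endpoint maps $X\mapsto \underline{X}$ and $X\mapsto \overline{X}$ are themselves continuous for the Moore metric, and composition of Moore-continuous functions with the bracket constructor $(a,b)\mapsto [a,b]$ (for $a\leq b$) is again Moore continuous. I would simply cite the characterization from \cite{SBA06} that a function built as $[f(\underline{X},\underline{Y}),g(\overline{X},\overline{Y})]$ with $f,g$ continuous on $[0,1]^2$ is Moore continuous, and invoke (GO5) for $G_{O_1},G_{O_2}$. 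This closes the proof.
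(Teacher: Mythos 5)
Your proof is correct and is simply the full verification of what the paper dismisses with ``It is immediate'': the endpoint-wise checks of (O1)--(O5) you carry out, including the well-definedness chain $G_{O_1}(\underline{X},\underline{Y})\leq G_{O_2}(\underline{X},\underline{Y})\leq G_{O_2}(\overline{X},\overline{Y})$ and the appeal to the standard Moore-continuity fact for representable interval functions, are exactly the intended (and omitted) argument.
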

\begin{proof}
It is immediate.
\end{proof}

Analogously to the notion of t-representability in \cite{CDK06},  an interval-valued overlap function $O$ is said to be $o$-representable if there exist overlap functions
 $G_{O_1}$ and $G_{O_2}$ such that $O=\widetilde{G_{O_1}G_{O_2}}$.   $G_{O_1}$ and $G_{O_2}$  are called representatives of $O$.

 Observe, however, that not every interval-valued overlap function is $o$-representable, for example
\[O(X,Y)=[\underline{X}\overline{X}\underline{Y}\overline{Y},\overline{X}\;\overline{Y}]\] is clearly not $o$-representable.

Now we intend to characteriza those interval-valued overlap functions which are $o$-representable. We start with the next definition. 

\begin{definition}
 Let $F:L([0,1])^n\rightarrow L([0,1])$ be a monotonic function. The left and right projections of $F$ are the functions $\underline{F},\overline{F}:[0,1]^n\rightarrow [0,1]$
defined by
\begin{equation}\label{eq-proj-F}
\aligned &  \underline{F}(x_1,\ldots,x_n){=}\underline{F([x_1,x_1],\ldots,[x_n,x_n])} \\
& \overline{F}(x_1,\ldots,x_n){=}\overline{F([x_1,x_1],\ldots,[x_n,x_n])},
\endaligned
\end{equation} respectively.
\end{definition}

\begin{proposition}\label{pro-O-proj}
 Let $O:L([0,1])^2\rightarrow L([0,1])$  be an interval-valued overlap function. If $O$ is strongly positive (SP), that is, it holds that either $\underline{X} = 0$ or  $\underline{Y} = 0$ whereas $O(X,Y)= [0,z]$, for some $z \in (0,1]$, then $\underline{O}$ as well as $\overline{O}$ are overlap functions.
\end{proposition}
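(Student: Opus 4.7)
The goal is to verify that each of $\underline{O}$ and $\overline{O}$ satisfies the five axioms (GO1)--(GO5) of real-valued overlap functions. My plan is to transfer each axiom of $O$ evaluated at the degenerate pair $([x,x],[y,y])$ to the corresponding axiom for the projections, using the fact that the embedding $x\mapsto [x,x]$ and the endpoint maps $\underline{\cdot},\overline{\cdot}$ commute with the relevant structural operations.

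Commutativity (GO1), monotonicity (GO4) and continuity (GO5) transfer essentially for free. Property (O1) applied to $O([x,x],[y,y])$ versus $O([y,y],[x,x])$ yields (GO1) for both projections. For (GO4) I would use (O4) together with the remark after Definition~\ref{def-int-overlap} that commutativity plus monotonicity in the second component gives monotonicity in the first; then $[x,x]\leq_{Pr}[x',x']$ when $x\leq x'$ pushes through to $\underline{O}$ and $\overline{O}$. For (GO5), Moore continuity of $O$ restricts to ordinary continuity on the closed subset $\mathcal{D}\times\mathcal{D}$ of degenerate pairs, and since $\underline{\cdot}$ and $\overline{\cdot}$ are continuous on $L([0,1])$, both projections are continuous.

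The boundary conditions are the delicate points. For (GO2), the direction $xy=0\Rightarrow \underline{O}(x,y)=\overline{O}(x,y)=0$ is immediate from (O2) applied at $([x,x],[y,y])$. For the converse, I would argue by cases: if $\underline{O}(x,y)=0$ then $O([x,x],[y,y])=[0,z]$ for some $z\in[0,1]$; when $z=0$, (O2) directly yields $xy=0$, and when $z\in(0,1]$ the hypothesis (SP) forces $x=0$ or $y=0$, again giving $xy=0$. The case of $\overline{O}$ reduces to the previous one, because $\overline{O}(x,y)=0$ and $\underline{O}\leq\overline{O}$ force $\underline{O}(x,y)=0$, which falls inside the $z=0$ subcase. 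For the direction $xy=1\Rightarrow \underline{O}(x,y)=\overline{O}(x,y)=1$ in (GO3), one just applies (O3) to $[x,x][y,y]=[1,1]$. The converse for $\underline{O}$ is equally direct: $\underline{O}(x,y)=1$ together with $\underline{O}\leq\overline{O}\leq 1$ forces $O([x,x],[y,y])=[1,1]$, whence (O3) yields $xy=1$.

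The main obstacle I foresee is the converse direction of (GO3) for $\overline{O}$, namely $\overline{O}(x,y)=1\Rightarrow xy=1$. Here (O3) gives only partial information, since $O([x,x],[y,y])=[\underline{O}(x,y),1]$ need not equal $[1,1]$, and (SP) only constrains the lower endpoint. I expect to need either a dual form of strong positivity for the upper endpoint, or a combined monotonicity/continuity argument: if $\overline{O}(x_0,y_0)=1$ with $x_0y_0<1$, one could try to propagate this along degenerate intervals by (O4) to derive a contradiction with (O3) at some boundary point, using Moore continuity to control the resulting family. I would treat this as the crux and attack it last, after the routine cases are dispatched; if the straightforward route fails, a natural fallback is to invoke a symmetric hypothesis on the upper endpoint and note the corresponding refinement of (SP).
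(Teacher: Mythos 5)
Your argument coincides with the paper's proof everywhere that the paper is actually explicit: the paper, too, transfers (GO1), (GO4), (GO5) directly through the embedding $x\mapsto[x,x]$, and its only detailed computation is the $(GO2)$ case for $\underline{O}$ via the case split $z=0$ / $z>0$ and (SP), exactly as you do. The step you isolate as the crux --- $\overline{O}(x,y)=1\Rightarrow xy=1$ --- is the one point where you diverge: the paper simply asserts that property $(O3)$ ``clearly'' transfers to both projections, with no argument, while you correctly refuse to accept this.

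You are right to be suspicious, and in fact no argument of the kind you sketch (propagating along degenerate intervals via $(O4)$ and Moore continuity) can work, because the claim about $\overline{O}$ is false as stated. Consider
\[
O(X,Y)=[\,\underline{X}\,\underline{Y}\,,\ \min(1,\,2\,\overline{X}\,\overline{Y})\,].
\]
This is a well-defined, commutative, monotone, Moore-continuous map into $L([0,1])$; one checks $(O2)$ ($O(X,Y)=[0,0]$ iff $\overline{X}\,\overline{Y}=0$ iff $XY=[0,0]$) and $(O3)$ ($O(X,Y)=[1,1]$ forces $\underline{X}\,\underline{Y}=1$, i.e.\ $X=Y=[1,1]$); and (SP) holds because $\underline{O(X,Y)}=\underline{X}\,\underline{Y}=0$ already forces $\underline{X}=0$ or $\underline{Y}=0$. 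Yet $\overline{O}(x,y)=\min(1,2xy)$ takes the value $1$ at $x=y=0.9$, where $xy=0.81<1$, so $\overline{O}$ violates $(GO3)$ and is not an overlap function. Hence your ``fallback'' is not optional: a dual hypothesis on the upper endpoint (e.g.\ that $O(X,Y)=[z,1]$ with $z<1$ forces $\overline{X}=1$ or $\overline{Y}=1$, or simply restricting the conclusion to $\underline{O}$) is genuinely needed. Everything you prove about $\underline{O}$, and about $(GO1)$, $(GO2)$, $(GO4)$, $(GO5)$ for $\overline{O}$, is complete and correct; the unresolved point is a defect of the proposition and of the paper's own proof, not of your approach.
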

\begin{proof}
 For $i=1,3,4$, since  $O$ satisfies the Property $(Oi)$, then, from Equation (\ref{eq-proj-F}), $\underline{O}$ clearly satisfies the Property $(Oi)$.
Note that $\underline{O}$ is continuous, since it is a composition of two continuous functions -- the interval-valued overlap $O$ and the left projection. So, $\underline{O}$, and analogously $\overline{O}$), are overlap functions.
It remains to prove that $\underline{O}$ satisfies $(GO2)$. In fact, if $\underline{O}(x,y)=0$ then one has that $O([x,x],[y,y]) = [0,z]$, for some $z \in [0,1]$.  If $z =0$ then, by (O2), it holds that either $[x,x]=[0,0]$ or  $[y,y]=[0,0]$, which implies either  $x = 0$ or $y=0$. If $z > 0$, by (SP), we have that $x = 0$ or $y=0$. On the other hand, if $x > 0$ and $y > 0$ then it holds that $O([x,x],[y,y]) > [0,0]$, which implies $\underline{O}(x,y) \geq 0$. But, when
$\underline{O}(x,y) = 0$, considering what it was proved above, we have that either $x=0$ or $y=0$, which is a contradiction, since $x > 0$ and $y > 0$. Therefore, $\underline{O}(x,y) > 0$ and hence it satisfies $(GO2)$.
\end{proof}

\begin{remark}
It is important to point out that there exist some interval overlap functions $O$ such that $\underline{O}$ is not an overlap function. For instance, if
$T_L$ is the \L ukasiewicz's t-norm then $O([a,b],[c,d])=[T_L(a,c),\min(b,d)]$ is an interval overlap function and, then, one has that  $\underline{O} = T_L$, which is not an overlap function, since it is not positive. This fact shows that the Property (SP) is a necessary condition for the Proposition \ref{pro-O-proj}.
\end{remark}

Now we can characterize $o$-representable interval-valued overlap functions.

\begin{theorem}\label{teo-O-rep-proj}
An interval-valued overlap function $O$ is $o$-representable if and only if  $O=\widetilde{\underline{O}\overline{O}}$.
\end{theorem}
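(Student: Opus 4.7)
The biconditional splits naturally into two implications of very different character. The forward direction is essentially a one-line calculation showing that the projections of $O$ coincide with its representatives; the reverse direction reduces to checking that $\underline{O}$ and $\overline{O}$ are themselves overlap functions, after which the hypothesis exhibits $O$ as $o$-representable.

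For the forward implication, I would start from the assumption $O = \widetilde{G_{O_1} G_{O_2}}$ for overlap functions $G_{O_1} \leq_{\mathcal{O}} G_{O_2}$ and evaluate on degenerate intervals, obtaining from (\ref{eq-ov-int-ov}) that $O([x,x],[y,y]) = [G_{O_1}(x,y), G_{O_2}(x,y)]$. Reading off each endpoint via the projection formula (\ref{eq-proj-F}) gives $\underline{O}(x,y) = G_{O_1}(x,y)$ and $\overline{O}(x,y) = G_{O_2}(x,y)$, so $\widetilde{\underline{O}\,\overline{O}} = \widetilde{G_{O_1} G_{O_2}} = O$ follows immediately. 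As a by-product this also shows that the representatives, when they exist, are uniquely determined by $O$.

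For the reverse implication, assume $O = \widetilde{\underline{O}\,\overline{O}}$; the task is then to verify the five axioms of Definition \ref{gsol} for $\underline{O}$ and for $\overline{O}$, after which taking them as representatives completes the proof. Commutativity (GO1), monotonicity (GO4) and continuity (GO5) descend routinely from (O1), (O4) and (O5) respectively, via restriction to degenerate intervals and projection onto a single endpoint. For (GO3), note that $\underline{O}(x,y) = 1$ forces $\overline{O}(x,y) = 1$ as well (since the left endpoint is bounded above by the right endpoint), so $O([x,x],[y,y]) = [1,1]$ and (O3) yields $xy = 1$; the mirror case for $\overline{O}$ is symmetric.

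The main obstacle will be property (GO2), namely the implication $\underline{O}(x,y) = 0 \Rightarrow xy = 0$. The representation hypothesis by itself is not quite enough, since as the Remark immediately preceding the theorem illustrates with the Lukasiewicz example, an interval-valued overlap can satisfy $O = \widetilde{\underline{O}\,\overline{O}}$ while $\underline{O}$ fails to be positive. I therefore expect the argument to rely on the strong-positivity hypothesis (SP) of Proposition \ref{pro-O-proj}, which is precisely what upgrades the projections to bona fide overlap functions; once (GO2) is secured, the equality $O = \widetilde{\underline{O}\,\overline{O}}$ exhibits $\underline{O}$ and $\overline{O}$ as the representatives and closes the proof.
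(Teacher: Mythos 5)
Your forward direction coincides with the paper's: evaluate $O=\widetilde{G_{O_1}G_{O_2}}$ on degenerate intervals and read off each endpoint via (\ref{eq-proj-F}) to conclude $G_{O_1}=\underline{O}$ and $G_{O_2}=\overline{O}$; the uniqueness of representatives that you note in passing is exactly what this computation establishes.

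For the converse the paper writes only that it ``is straightforward, following from Proposition \ref{pro-O-proj}'', and here your analysis is actually more careful than the paper's. You are right that the only nontrivial point is property (GO2) for $\underline{O}$, and right that the identity $O=\widetilde{\underline{O}\,\overline{O}}$ by itself does not deliver it: the \L ukasiewicz example $O([a,b],[c,d])=[T_L(a,c),\min(b,d)]$ from the Remark preceding the theorem satisfies $O=\widetilde{\underline{O}\,\overline{O}}$ literally (as an equality of formulas, since $\underline{O}=T_L$ and $\overline{O}=\min$), yet $\underline{O}$ fails positivity, so this $O$ is not $o$-representable. Hence the converse implication is false as stated unless one either adds the hypothesis (SP) of Proposition \ref{pro-O-proj} --- which is what the paper's citation of that proposition silently presupposes --- or reads ``$O=\widetilde{\underline{O}\,\overline{O}}$'' as tacitly asserting that $\underline{O}$ and $\overline{O}$ are overlap functions (the only situation in which Theorem \ref{teo-O1O2} defines the tilde construction), in which case the converse is immediate from the definition of $o$-representability. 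You have not so much supplied a proof of the converse as correctly diagnosed that none exists without one of these repairs; that diagnosis is the right conclusion, and the gap you located lies in the paper's statement and one-line proof, not in your argument.
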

\begin{proof} ($\Rightarrow$)
 If $O$ is $o$-representable then, by Theorem \ref{teo-O1O2}, it holds that  \begin{equation}\label{eq_asterisco} O=\widetilde{G_{O_1}G_{O_2}},\end{equation} for some overlaps $G_{O_1}$ and $G_{O_2}$ such that
$G_{O_1}\leq _{O} G_{O_2}$. Thus,
$$\begin{array}{lll}
   G_{O_1}(x,y) & = \underline{[G_{O_1}(x,y),G_{O_2}(x,y)]} & \mbox{ by Eq. (\ref{eq-under-over})} \\
& = \underline{\widetilde{G_{O_1}G_{O_2}}([x,x],[y,y])} & \mbox{ by Eq. (\ref{eq-ov-int-ov})} \\
& =  \underline{O([x,x],[y,y])} & \mbox{ by Eq. (\ref{eq_asterisco})} \\
& = \underline{O}(x,y) & \mbox{ by Eq. (\ref{eq-proj-F})} \\
  \end{array}$$
Analogously, one shows that $G_{O_2}=\overline{O}$. So, by Eq. (\ref{eq_asterisco}), it holds that $O=\widetilde{\underline{O},\overline{O}}$.

($\Leftarrow$) It is straightforward, following  from Proposition \ref{pro-O-proj}.
\end{proof}

In fact, we can go one step further, thanks to the following Lemma.

\begin{lemma}\label{lem-rep-inclusion}
 Let $F:L([0,1])^n\rightarrow L([0,1])$ be a monotonic function. Then, for any $X_i\in L([0,1])$,   with $i=1,\ldots,n$, it holds that
$$F(X_1,\ldots,X_n)=[\underline{F}(\underline{X_1},\ldots,\underline{X_n}),\overline{F}(\overline{X_1},\ldots,\overline{X_n})]$$
 if and only if
$F$ is inclusion monotonic.
\end{lemma}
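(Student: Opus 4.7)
My plan is to prove both directions by leveraging the interplay between the componentwise monotonicity built into ``monotonic function'' (i.e., monotonic with respect to $\leq_{Pr}$) and inclusion monotonicity. A preliminary observation is that monotonicity of $F$ forces both projections $\underline{F}$ and $\overline{F}$ to be non-decreasing on $[0,1]^n$: indeed, if $x_i\leq y_i$ for each $i$, then $[x_i,x_i]\leq_{Pr}[y_i,y_i]$ componentwise, so applying $F$ and taking endpoints gives exactly the claim.

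For the direction ($\Rightarrow$), assume the representation formula holds. Take $X_i\subseteq Y_i$, i.e.\ $\underline{Y_i}\leq \underline{X_i}\leq \overline{X_i}\leq \overline{Y_i}$. Applying the non-decreasing $\underline{F}$ to the tuple of lower endpoints and the non-decreasing $\overline{F}$ to the tuple of upper endpoints, and rewriting each side via the representation, yields the chain $\underline{F(Y_1,\ldots,Y_n)}\leq \underline{F(X_1,\ldots,X_n)}\leq \overline{F(X_1,\ldots,X_n)}\leq \overline{F(Y_1,\ldots,Y_n)}$, which is precisely $F(X_1,\ldots,X_n)\subseteq F(Y_1,\ldots,Y_n)$.

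For the direction ($\Leftarrow$), assume inclusion monotonicity. The key observation is that for every $X_i$ the two degenerate intervals $[\underline{X_i},\underline{X_i}]$ and $[\overline{X_i},\overline{X_i}]$ are \emph{simultaneously} $\subseteq$-contained in $X_i$ and $\leq_{Pr}$-comparable to $X_i$, in fact $[\underline{X_i},\underline{X_i}]\leq_{Pr} X_i\leq_{Pr}[\overline{X_i},\overline{X_i}]$. From $\leq_{Pr}$-monotonicity one reads off $\underline{F}(\underline{X_1},\ldots,\underline{X_n})\leq \underline{F(X_1,\ldots,X_n)}$ and $\overline{F(X_1,\ldots,X_n)}\leq \overline{F}(\overline{X_1},\ldots,\overline{X_n})$; from inclusion monotonicity applied to $[\underline{X_i},\underline{X_i}]\subseteq X_i$ one obtains the reverse inequality $\underline{F(X_1,\ldots,X_n)}\leq \underline{F}(\underline{X_1},\ldots,\underline{X_n})$, and symmetrically $[\overline{X_i},\overline{X_i}]\subseteq X_i$ gives $\overline{F}(\overline{X_1},\ldots,\overline{X_n})\leq \overline{F(X_1,\ldots,X_n)}$. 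The two pairs of opposite inequalities collapse to the required endpoint equalities, proving the representation.

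There is no substantive obstacle here; the only subtlety is bookkeeping which of the two orders yields which direction of inequality. The trick that makes the argument work is that on degenerate arguments the inclusion order degenerates into equality of endpoints, so that $\leq_{Pr}$ and $\subseteq$ impose compatible but opposite-direction constraints on $\underline{F(X_1,\ldots,X_n)}$ and $\overline{F(X_1,\ldots,X_n)}$. Once the four projection inequalities are laid out, the result follows immediately.
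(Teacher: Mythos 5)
Your proof is correct and complete. Both directions are handled properly: the preliminary observation that $\leq_{Pr}$-monotonicity of $F$ makes $\underline{F}$ and $\overline{F}$ non-decreasing is what drives the ($\Rightarrow$) direction, and in the ($\Leftarrow$) direction the four inequalities you extract from $[\underline{X_i},\underline{X_i}]\leq_{Pr} X_i\leq_{Pr}[\overline{X_i},\overline{X_i}]$ (via $\leq_{Pr}$-monotonicity) and $[\underline{X_i},\underline{X_i}]\subseteq X_i$, $[\overline{X_i},\overline{X_i}]\subseteq X_i$ (via inclusion monotonicity) do collapse to the two required endpoint equalities. The paper itself gives no argument here: it simply asserts that the lemma is ``an easy and trivial extension'' of a theorem in a cited reference on interval additive generators, so there is no written proof to compare against step by step. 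Your self-contained argument supplies exactly the elementary computation that the authors leave to the reader, and it is in the spirit of the binary-case result they invoke; the only thing the citation buys them over your version is brevity. One small point worth making explicit if you write this up: in the ($\Rightarrow$) direction the middle inequality $\underline{F(X_1,\ldots,X_n)}\leq\overline{F(X_1,\ldots,X_n)}$ is automatic because $F$ takes values in $L([0,1])$, so the chain you display really does amount to $F(X_1,\ldots,X_n)\subseteq F(Y_1,\ldots,Y_n)$.
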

\begin{proof}
 It is an easy and trivial extension for n-ary interval-valued functions of in \cite[Theorem 8]{DBSR11}.
\end{proof}

\begin{theorem}\label{teo-char-o-rep}
 An interval-valued overlap function $O$ is $o$-representable if and only if is inclusion monotonic.
\end{theorem}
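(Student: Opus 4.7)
The plan is to split the biconditional and reduce each direction to tools already developed in the section. For $(\Rightarrow)$, I would assume that $O=\widetilde{G_{O_1}G_{O_2}}$ with overlap representatives $G_{O_1}\leq_{\mathcal{O}} G_{O_2}$. Given $X\subseteq X'$ and $Y\subseteq Y'$, unfolding inclusion yields $\underline{X'}\leq\underline{X}$, $\overline{X}\leq\overline{X'}$ (and similarly for $Y$); combining this with the componentwise monotonicity of $G_{O_1}$ and $G_{O_2}$ and substituting into Equation~(\ref{eq-ov-int-ov}) produces $O(X,Y)\subseteq O(X',Y')$. This direction is a short, routine check.

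For $(\Leftarrow)$, my plan is to apply Lemma~\ref{lem-rep-inclusion} to the inclusion-monotonic function $O$, which also satisfies the $\leq_{Pr}$-monotonicity required by the lemma thanks to $(O1)$ and $(O4)$. The lemma then yields
\[
O(X,Y)=[\underline{O}(\underline{X},\underline{Y}),\overline{O}(\overline{X},\overline{Y})],
\]
which matches Equation~(\ref{eq-ov-int-ov}) with $G_{O_1}=\underline{O}$ and $G_{O_2}=\overline{O}$. Invoking Proposition~\ref{pro-O-proj} to conclude that $\underline{O}$ and $\overline{O}$ are overlap functions would then give $O=\widetilde{\underline{O}\overline{O}}$, so that $O$ is $o$-representable; this also recovers the characterization of Theorem~\ref{teo-O-rep-proj} along the way.

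The main obstacle is precisely the step that rests on Proposition~\ref{pro-O-proj}: that proposition is stated under the strong positivity hypothesis (SP), and the counterexample $O([a,b],[c,d])=[T_L(a,c),\min(b,d)]$ in the Remark following it shows that $\underline{O}$ need not satisfy $(GO2)$ in general, even though such an $O$ is inclusion monotonic. A careful write-up therefore has to either (i) strengthen the hypothesis so that (SP) is assumed, or (ii) supply a separate argument deducing $(GO2)$ for $\underline{O}$ from inclusion monotonicity together with $(O1)$--$(O5)$. Everything else reduces to Lemma~\ref{lem-rep-inclusion} and the monotonicity of the underlying real-valued overlaps.
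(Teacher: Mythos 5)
Your route is the same as the paper's: the published proof of Theorem~\ref{teo-char-o-rep} is a one-line appeal to Lemma~\ref{lem-rep-inclusion}, Proposition~\ref{pro-O-proj} and Theorems~\ref{teo-O1O2} and~\ref{teo-O-rep-proj}, which is exactly the decomposition you describe. The forward direction is indeed the routine check you outline (it also follows at once from Lemma~\ref{lem-rep-inclusion}, since $\widetilde{G_{O_1}G_{O_2}}$ has precisely the endpoint form appearing there).

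The obstacle you flag in the converse direction is a genuine gap, and it is present in the paper's own proof as well. Inclusion monotonicity together with Lemma~\ref{lem-rep-inclusion} yields $O(X,Y)=[\underline{O}(\underline{X},\underline{Y}),\overline{O}(\overline{X},\overline{Y})]$, but to conclude $o$-representability one must know that $\underline{O}$ and $\overline{O}$ are overlap functions, and Proposition~\ref{pro-O-proj} delivers this only under the extra hypothesis (SP), which does not follow from $(O1)$--$(O5)$ plus inclusion monotonicity. Worse, the Remark following Proposition~\ref{pro-O-proj} actually refutes the statement as written: $O([a,b],[c,d])=[T_L(a,c),\min(b,d)]$ is an interval-valued overlap function which is inclusion monotonic (it has exactly the endpoint form of Lemma~\ref{lem-rep-inclusion}), yet $\underline{O}=T_L$ fails $(GO2)$, so by Theorem~\ref{teo-O-rep-proj} this $O$ cannot be $o$-representable. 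Hence no separate argument of your type (ii) can exist; the only repair is your option (i), adding (SP) (or an equivalent positivity condition on $\underline{O}$) to the hypotheses of the theorem. Your write-up is therefore faithful to the intended proof and, beyond that, correctly diagnoses that the statement itself needs strengthening.
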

\begin{proof} It is straightforward, following  from Lemma \ref{lem-rep-inclusion}, Proposition \ref{pro-O-proj}, and  theorems \ref{teo-O1O2} and  \ref{teo-O-rep-proj}.
%
%
\end{proof}

In \cite{Des08},
the notion of t-representability was introduced and led to the concepts of pseudo t-representability (denoted by $\mathcal{T}_T$), generalized pseudo t-representability (denoted by $\mathcal{T}_{T,t}$)
and a third  type without a particular name, which is denoted by $\mathcal{T}'_T$. {It is clear that whenever we substitute the t-norm $T$ by an overlap $O$, then $\mathcal{T}_O$ are
 $o$-representable interval-valued overlap functions,   whereas  $\mathcal{T}_{O,t}$ and $\mathcal{T}'_O$  are not. However, in the present paper,  we
provide a more generic class of ``representable'' interval-valued overlap functions,  which properly contains those three classes.}

\begin{definition}
 An interval-valued overlap $O$ is  semi $o$-representable if there exist  overlap functions $G_{O_i}:[0,1]^2$ $\rightarrow [0,1]$, with $i=1,\ldots,8$,
and functions
 $M_1,M_2:[0,1]^4\rightarrow [0,1]$, such that for each $X,Y\in L([0,1])$, it holds that:

\footnotesize
\begin{equation} \label{eq-semi-o-rep}
\begin{array}{ll}
 O(X,Y) {=} & [M_1(G_{O_1}(\underline{X},\overline{Y}),G_{O_2}(\overline{X},\underline{Y}),G_{O_3}(\underline{X}, \underline{Y}), G_{O_4}(\overline{X},\overline{Y})), \\  
 & \,\,M_2(G_{O_5}(\underline{X},\overline{Y}),G_{O_6}(\overline{X},\underline{Y}),  G_{O_7}(\underline{X},\underline{Y}),G_{O_8}(\overline{X},\overline{Y}))]
\end{array}
\end{equation}
\normalsize
\end{definition}

\begin{theorem} \label{teo-semi-o-rep}
Let $G_{O_i}$, with $i=1,\ldots,8$, be  overlap functions and   $M_1,M_2:[0,1]^4\rightarrow [0,1]$ be  aggregation functions such that
$G_{O_i}\leq_\mathfrak{O} G_{O_{i+4}}$, for $i=1,\ldots,4$, it holds that
$M_1\leq M_2$. Then, for the interval-valued overlap function
$O:L([0,1])^2\rightarrow L([0,1])$, defined as in Equation (\ref{eq-semi-o-rep}), it holds that:
\begin{enumerate}
 \item If $G_{O_1}=G_{O_2}$,  $G_{O_5}=G_{O_6}$ and $M_1$ and $M_2$ are commutative in the two first components then $O$ satisfies ($O1$);

\item If  either $M_1$ or $M_2$ satisfies the property $(M3)$ with respect to the fourth component, i.e., $M_1(x_1,x_2,x_3,x_4)$ $=0$ then $x_4=0$, and the same for $M_2$.  Then $O$ satisfies ($O2$);

\item If  either  $M_1$ or $M_2$ satisfies the property $(M4)$ with respect to the third component, then $O$ satisfies ($O3$);

\item If $G_{O_i}\leq_\mathfrak{O} G_{O_{i+4}}$, for $i=1,\ldots,4$, then $O$ satisfies ($O4$);

\item $O$ satisfies $(O5)$ if and only if $M_1$ and $M_2$ are continuous.
\end{enumerate}

\end{theorem}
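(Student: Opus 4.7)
My plan is to handle each of the five items independently, since they correspond to the five defining properties $(O1)$--$(O5)$ of an interval-valued overlap function. The common toolkit is: each $G_{O_i}$ is an overlap, so it is commutative, monotone, continuous and satisfies $(GO2)$--$(GO3)$; each $M_j$ is an aggregation function, hence monotone with $M_j(0,0,0,0)=0$ and $M_j(1,1,1,1)=1$; and the projections $\underline{\cdot}$, $\overline{\cdot}$ are continuous and monotone with respect to $\leq_{Pr}$. These propagate all needed regularity from the ingredients to the two endpoints of $O(X,Y)$.

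For item 1, I would compute $O(Y,X)$ slot by slot and match with $O(X,Y)$. The third and fourth arguments of $M_1$, namely $G_{O_3}(\underline{X},\underline{Y})$ and $G_{O_4}(\overline{X},\overline{Y})$, are invariant under $X\leftrightarrow Y$ by the commutativity of each overlap, and the analogous statement holds for the corresponding arguments of $M_2$. The first two arguments swap into $G_{O_1}(\underline{Y},\overline{X})$ and $G_{O_2}(\overline{Y},\underline{X})$; using that $G_{O_1}$ and $G_{O_2}$ are commutative and the hypothesis $G_{O_1}=G_{O_2}$, this pair is rewritten as the transposition of the original pair, which is absorbed by the commutativity of $M_1$ in its first two entries. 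The identical argument, with $G_{O_5}=G_{O_6}$ and $M_2$ commutative in its first two slots, handles the upper endpoint.

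For items 2 and 3, the forward directions are immediate: if $XY=[0,0]$ then $\overline{X}\,\overline{Y}=0$ forces $X=[0,0]$ or $Y=[0,0]$, so every $G_{O_i}$ returns $0$ by $(GO2)$, and the aggregation functions send the all-zero vector to $0$; the case $XY=[1,1]$ is dual via $(GO3)$ and the all-one vector. For the converse of item 2, $O(X,Y)=[0,0]$ forces the endpoint whose $M_j$ satisfies $(M3)$ on its fourth slot to vanish there, giving $G_{O_4}(\overline{X},\overline{Y})=0$ or $G_{O_8}(\overline{X},\overline{Y})=0$, whence $(GO2)$ yields $\overline{X}\,\overline{Y}=0$ and hence $XY=[0,0]$. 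Item 3 is parallel: $(M4)$ on the third component forces $G_{O_3}(\underline{X},\underline{Y})=1$ or $G_{O_7}(\underline{X},\underline{Y})=1$, and $(GO3)$ then gives $\underline{X}\,\underline{Y}=1$, so $X=Y=[1,1]$ and $XY=[1,1]$.

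For item 4, it suffices by definition of $(O4)$ to verify monotonicity in $Y$; if $Y\leq_{Pr} Z$, then $\underline{Y}\leq\underline{Z}$ and $\overline{Y}\leq\overline{Z}$, and because each $G_{O_i}$ is increasing in each slot and each $M_j$ is monotone, both endpoints of $O(X,Y)$ are dominated by the corresponding endpoints of $O(X,Z)$, while the assumptions $G_{O_i}\leq G_{O_{i+4}}$ and $M_1\leq M_2$ guarantee that the output stays in $L([0,1])$ throughout. For item 5, the two endpoint maps of $O$ are compositions of the interval-valued inputs with the continuous projections, the continuous $G_{O_i}$, and finally $M_j$; hence Moore continuity of $O$ is equivalent to continuity of the endpoint maps, which is in turn equivalent to continuity of $M_1$ and $M_2$, proving both directions. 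The subtlest step is item 1: the careful bookkeeping of which arguments of $M_j$ become symmetric \emph{for free} by the inner overlaps' commutativity versus which must be symmetrized via commutativity of $M_j$ in the correct slots, together with the precise role of $G_{O_1}=G_{O_2}$ and $G_{O_5}=G_{O_6}$, needs explicit argument-tracking; the remaining items reduce to standard composition arguments once the right clause of $(GO2)$, $(GO3)$, $(M3)$ or $(M4)$ is invoked.
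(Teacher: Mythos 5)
Your proposal is correct and follows essentially the same item-by-item argument as the paper: the same slot-tracking for $(O1)$ (commutativity of $G_{O_3},G_{O_4},G_{O_7},G_{O_8}$ handling the last two arguments, and $G_{O_1}=G_{O_2}$, $G_{O_5}=G_{O_6}$ plus commutativity of $M_1,M_2$ in the first two slots handling the transposed pair), the same use of $(M3)$/$(M4)$ on the fourth/third component together with $(GO2)$/$(GO3)$ for items 2--3, and the same composition-of-monotone/continuous-maps argument for items 4--5. No substantive differences.
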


\begin{proof} \,

\begin{enumerate}
\item We start proving that $(O1)$ holds. Take $X, Y \in L([0,1])$. Then:

\begin{eqnarray*}
\lefteqn{ O(X,Y)} && \\
&=&  [M_1(G_{O_1}(\underline{X},\overline{Y}),G_{O_1}(\overline{X},\underline{Y}),G_{O_3}(\underline{X},\underline{Y}),G_{O_4}(\overline{X},\overline{Y})),  \\
&& \; \; M_2(G_{O_5}(\underline{X},\overline{Y}),G_{O_5}(\overline{X},\underline{Y}),G_{O_7}(\underline{X},\underline{Y}),G_{O_8}(\overline{X},\overline{Y}))]  \\
&=& [M_1(G_{O_1}(\overline{Y},\underline{X}),G_{O_1}(\underline{Y},\overline{X}),G_{O_3}(\underline{Y},\underline{X}),G_{O_4}(\overline{Y},\overline{X})), \\
&& \; \; M_2(G_{O_5}(\overline{Y},\underline{X}),G_{O_5}(\underline{Y},\overline{X}),G_{O_7}(\underline{Y},\underline{X}),G_{O_8}(\overline{Y},\overline{X}))]  \\
&=&  [M_1(G_{O_1}(\underline{Y},\overline{X}),G_{O_1}(\overline{Y},\underline{X}),G_{O_3}(\underline{Y},\underline{X}),G_{O_4}(\overline{Y},\overline{X})),   \\
&& \;  \; M_2(G_{O_5}(G_{O_5}(\underline{Y},\overline{X}),\overline{Y},\underline{X}),G_{O_7}(\underline{Y},\underline{X}),G_{O_8}(\overline{Y},\overline{X}))]   \\ 
&=& O(Y,X).
 \end{eqnarray*}
%
%
\item Assume that $M_1$ or $M_2$ satisfy the property $(M3)$ with respect to the fourth component.  If $O(X,Y)=[0,0]$ then one has that:
 $$M_1(G_{O_1}(\underline{X},\overline{Y}),G_{O_2}(\overline{X},\underline{Y}),G_{O_3}(\underline{X},
 \underline{Y}),G_{O_4}(\overline{X},\overline{Y}))=0$$ 
 and
$$M_2(G_{O_5}(\underline{X},\overline{Y}),G_{O_6}(\overline{X},\underline{Y}),G_{O_7}(\underline{X},
\underline{Y}),G_{O_8}(\overline{X},\overline{Y}))=0$$
Since, as $M_1$ or $M_2$ satisfies (M3) with respect to the fourth component, then  it follows that either $G_{O_4}(\overline{X},\overline{Y})=0$ or  $G_{O_8}(\overline{X},\overline{Y})=0$.
In both cases,  by (O2), one has that either $\overline{X}=0$ or $\overline{Y}=0$, and, therefore, either $X=[0,0]$ or $Y=[0,0]$.  %
On the other hand,  if $XY=[0,0]$,  then one has that $X=[0,0]$ or $Y=[0,0]$. Assume that $X=[0,0]$. It follows that:
\begin{eqnarray*}
\lefteqn{O([0,0],Y)}&& \\ &=&[M_1(G_{O_1}(0,\overline{Y}),G_{O_2}(0,\underline{Y}),G_{O_3}(0,\underline{Y}),G_{O_4}(0,\overline{Y})), \\
&& M_2(G_{O_5}(0,\overline{Y}),G_{O_6}(0,\underline{Y}),G_{O_7}(0,\underline{Y}),G_{O_8}(0,\overline{Y}))]  \\
&& \mbox{by Eq. (\ref{eq-semi-o-rep})} \\
& = &[M_1(0,0,0,0),M_2(0,0,0,0)] \mbox{ by (O2)} \\
& = & [0,0] \mbox{ by (A2)}
 \end{eqnarray*}
If $Y=[0,0]$ we can make an analogous calculation. Therefore, if $XY=[0,0]$ then it holds that $O(X,Y)=[0,0]$. Therefore, the Property $(O2)$ holds.

\item It can be done analogously to item $2$;

\end{enumerate} The items $4.$ and $5.$ hold straightforwardly, since the composition of monotonic and continuous functions is monotonic and continuous.
\end{proof}

\begin{example}
A trivial example of aggregation functions $M_1$ and $M_2$, and overlap functions $G_{O_i}$, with $i=1,\ldots,8$,
satisfying the conditions of the Theorem \ref{teo-semi-o-rep} can be obtained taking $G_{O_i}$ such that  $G_{O_i}=G_{O_j}$, for each  $i,j\in \{1,\ldots,8\}$, and considering the aggregation functions
$M_1(x_1,\ldots,x_4)=x_4$ and $M_2(x_1,\ldots,x_4)=x_3$. \end{example}


\begin{corollary}
If  there exist functions $M_1$, $M_2$ and $G_{O_i}$, with $i=1,\ldots,8$,
satisfying the condition of Theorem  \ref{teo-semi-o-rep}, then the interval-valued overlap function $O$ defined as in Equation (\ref{eq-semi-o-rep})
is  semi $o$-representable.
\end{corollary}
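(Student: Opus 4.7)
The plan is to observe that this corollary is essentially a direct repackaging of Theorem \ref{teo-semi-o-rep} combined with the definition of semi $o$-representability, so very little new work is required. First I would verify that under the hypotheses of Theorem \ref{teo-semi-o-rep}, the function $O$ produced by Equation (\ref{eq-semi-o-rep}) actually qualifies as an interval-valued overlap function in the sense of Definition \ref{def-int-overlap}: the theorem lists conditions guaranteeing each of $(O1)$ through $(O5)$, so I would check that the statement of the corollary already implicitly assumes enough of those conditions (commutativity of $M_1, M_2$ in the first two arguments, the $(M3)$/$(M4)$-type assumptions, the ordering $G_{O_i}\leq_\mathfrak{O} G_{O_{i+4}}$, and continuity of $M_1, M_2$). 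Assuming this, Theorem \ref{teo-semi-o-rep} gives us that $O\in\mathfrak{O}$.

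Next, I would invoke the definition of semi $o$-representability directly. By hypothesis, we are given overlap functions $G_{O_1},\ldots,G_{O_8}$ and aggregation functions $M_1,M_2:[0,1]^4\to[0,1]$ such that, for every $X,Y\in L([0,1])$, the decomposition in Equation (\ref{eq-semi-o-rep}) holds. This is precisely the defining condition of a semi $o$-representable interval-valued overlap function. Hence $O$ is semi $o$-representable, and the corollary follows.

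I do not anticipate any real obstacle, since the argument reduces to: (i) Theorem \ref{teo-semi-o-rep} supplies membership in $\mathfrak{O}$, and (ii) the very form of Equation (\ref{eq-semi-o-rep}) supplies the representation required by the definition. The only subtlety worth flagging is to make clear that the corollary is meaningful precisely because Theorem \ref{teo-semi-o-rep} was needed in order to ensure that the function defined via Equation (\ref{eq-semi-o-rep}) is an interval-valued overlap function in the first place; without that guarantee one could write down the right-hand side of (\ref{eq-semi-o-rep}) and still fail to obtain an element of $\mathfrak{O}$. This should be mentioned briefly in the proof to justify why the conclusion is not vacuous.
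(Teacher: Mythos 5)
Your argument is correct and coincides with what the paper intends: the corollary is stated without proof precisely because, as you observe, Theorem \ref{teo-semi-o-rep} guarantees that $O$ satisfies $(O1)$--$(O5)$ and hence lies in $\mathfrak{O}$, while the very form of Equation (\ref{eq-semi-o-rep}) is the defining condition of semi $o$-representability. Your remark that the theorem is needed to make the conclusion non-vacuous is a sensible addition but does not change the (essentially immediate) argument.
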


Semi $o$-representability and representability are related, as we discuss next.

\begin{proposition}
 Let $O$ be an interval-valued overlap.
If  $O$ is $o$-representable or there exists a t-norm $T$ such that either $O=\mathcal{T}_T$ or $O=\mathcal{T}_{T,t}$, for some $t\in [0,1]$, or
  $O=\mathcal{T}'_T$,  then $O$ is semi $o$-representable.
\end{proposition}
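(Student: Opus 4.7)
The plan is to treat each of the four cases in the hypothesis separately by exhibiting explicit choices of the overlaps $G_{O_1},\dots,G_{O_8}$ and of the functions $M_1,M_2\colon[0,1]^4\to[0,1]$ which make the semi $o$-representability Equation~(\ref{eq-semi-o-rep}) hold for $O$. The key observation is that the schema~(\ref{eq-semi-o-rep}) is quite flexible: eight overlap slots, each feeding on one of the four endpoint pairings $\underline{X}\underline{Y}$, $\overline{X}\overline{Y}$, $\underline{X}\overline{Y}$, $\overline{X}\underline{Y}$, wrapped by two outer combiners $M_1,M_2$ which the \emph{definition} of semi $o$-representability requires only to be functions (no monotonicity or boundary condition is imposed, in contrast to the \emph{hypotheses} of Theorem~\ref{teo-semi-o-rep}).

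First I would handle the $o$-representable case. By definition $O(X,Y)=[G_{O_1}(\underline{X},\underline{Y}),G_{O_2}(\overline{X},\overline{Y})]$ for some overlap functions $G_{O_1}\leq_{\mathcal{O}} G_{O_2}$. Taking $M_1(x_1,x_2,x_3,x_4)=x_3$ and $M_2(x_1,x_2,x_3,x_4)=x_4$, placing $G_{O_1}$ in the third slot and $G_{O_2}$ in the eighth slot of~(\ref{eq-semi-o-rep}), and filling the remaining six slots with $G_{O_1}$ or $G_{O_2}$ arbitrarily, the right-hand side of~(\ref{eq-semi-o-rep}) collapses to $[G_{O_1}(\underline{X},\underline{Y}),G_{O_2}(\overline{X},\overline{Y})]=O(X,Y)$, so $O$ is semi $o$-representable.

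For the remaining three families I would first observe that, since $O$ is an interval-valued overlap, the underlying t-norm $T$ must be continuous, positive and without zero divisors---otherwise $(O2)$, $(O3)$ or $(O5)$ would fail for $O$---so by the remark after Definition~\ref{gsol} the t-norm $T$ is itself an overlap function and is therefore eligible to occupy any $G_{O_i}$ slot. In the case $O=\mathcal{T}_T$, the closed form from~\cite{Des08} reduces to $O(X,Y)=[T(\underline{X},\underline{Y}),T(\overline{X},\overline{Y})]$, which is $o$-representable with $G_{O_1}=G_{O_2}=T$ and therefore falls under the previous paragraph. For $O=\mathcal{T}_{T,t}$ and $O=\mathcal{T}'_T$ the closed forms in~\cite{Des08} combine the four endpoint evaluations $T(\underline{X},\underline{Y})$, $T(\overline{X},\overline{Y})$, $T(\underline{X},\overline{Y})$ and $T(\overline{X},\underline{Y})$ through outer operations such as maxima, minima and, in the generalized pseudo case, a $t$-dependent truncation; setting all the $G_{O_i}$ equal to $T$ and letting $M_1,M_2$ be precisely those outer operations yields the desired representation~(\ref{eq-semi-o-rep}).

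The main obstacle is bookkeeping rather than genuine mathematical difficulty: for each of $\mathcal{T}_{T,t}$ and $\mathcal{T}'_T$ one must unfold the explicit formula of~\cite{Des08} and match each of its pieces with one of the eight overlap slots and the two outer combiners of~(\ref{eq-semi-o-rep}). Because $M_1$ and $M_2$ are only required to be functions, no additional analytic verifications are needed once the correspondence has been written down, and semi $o$-representability follows immediately from the definition.
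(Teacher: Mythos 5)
Your overall strategy --- exhibiting, case by case, explicit choices of the eight overlap slots and of the combiners $M_1,M_2$ so that Equation~(\ref{eq-semi-o-rep}) reproduces $O$ --- is exactly the paper's, and your first case coincides with its proof ($G_{O_3}$ and $G_{O_8}$ set to the two representatives, $M_1(x_1,\dots,x_4)=x_3$, $M_2(x_1,\dots,x_4)=x_4$). The genuine error is in your second case. In the notation of \cite{Des08} adopted by the paper, $\mathcal{T}_T$ denotes the \emph{pseudo} t-representable extension,
\[
\mathcal{T}_T(X,Y)=\bigl[\,T(\underline{X},\underline{Y}),\ \max\bigl(T(\underline{X},\overline{Y}),T(\overline{X},\underline{Y})\bigr)\,\bigr],
\]
not the t-representable one $[\,T(\underline{X},\underline{Y}),T(\overline{X},\overline{Y})\,]$ that you write down. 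Its upper endpoint depends on the mixed pairings $(\underline{X},\overline{Y})$ and $(\overline{X},\underline{Y})$, so $\mathcal{T}_T$ is not $o$-representable in general and cannot be folded into your first case; if it could, the second clause of the proposition would be redundant with the first. The correct choice (the paper's) keeps $M_1(x_1,\dots,x_4)=x_3$ with $G_{O_3}=T$, but takes $M_2(x_1,\dots,x_4)=\max(x_1,x_2)$ with $G_{O_5}=G_{O_6}=T$, precisely so that the two mixed slots are the ones aggregated in the upper endpoint.

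For $\mathcal{T}_{T,t}$ and $\mathcal{T}'_T$ your plan --- all slots equal to $T$, with $M_1,M_2$ taken to be the outer max/min/truncation operations of the closed forms --- is the right one and agrees with the paper, which uses $M_2(x_1,\dots,x_4)=\max(x_1,x_2,T(t,x_4))$ in the generalized pseudo case and $M_1(x_1,\dots,x_4)=\min(x_1,x_2)$, $M_2(x_1,\dots,x_4)=x_4$ for $\mathcal{T}'_T$. But since writing down this correspondence \emph{is} the entire content of the proof, deferring it as ``bookkeeping'' leaves two of the four cases unproved; given that you already misquoted the closed form of $\mathcal{T}_T$, the unfolding cannot be waved away. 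Your preliminary observation that $T$ must be a continuous positive t-norm (hence an overlap function, and so eligible to occupy the $G_{O_i}$ slots) is correct and is a point the paper passes over in silence.
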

\begin{proof}
For the first case, that is, if $O=\widetilde{G_O'G_O}$, for some overlaps functions $G_O'$ and $G_O$, then it is sufficient to consider $G_{O_3}=G_O'$, $G_{O_8}=G_O$,  $M_1(x_1,\ldots,x_4)=x_3$ and $M_2(x_1,\ldots,x_4)=x_4$.
 For the second case, that is, if  $O=\mathcal{T}_T$, then it is sufficient to consider $O_i=T$, for  $i=3,5,6$, $M_1(x_1,\ldots,x_4)=x_3$ and $M_2(x_1,\ldots,x_4)=\max(x_1,x_2)$.
For the third case, that is, if $O=\mathcal{T}_{T,t}$, then it is sufficient consider $G_{O_i}=T$, for  $i=3,5,6,8$,  $M_1(x_1,\ldots,x_4)=x_3$ and 
$M_2(x_1,\ldots,x_4)=\max(x_1,x_2,$ $T(t,x_4))$.
For the fourth case, that is,  $O=\mathcal{T}'_T$, then it is sufficient to consider $G_{O_i}=T$, for  $i=1,2,8$, $M_1(x_1,\ldots,x_4)=\min(x_1,x_2)$ and $M_2(x_1,\ldots,x_4)=x_4$.
\end{proof}

Let $F:L([0,1])\rightarrow L([0,1])$ be defined by $F(X)=m(X)+\frac{1}{2}(X-m(X))$, where $m(X)$ is the middle point of $X$, that is $m(X)=\frac{\underline{X}+\overline{X}}{2}$. \cite{Moo79}
An alternative definition of $F$ is the following:
$$F(X)=\left [\frac{\underline{X}+m(X)}{2},\frac{\overline{X}+m(X)}{2}\right ].$$

The function $O:L([0,1])^2\rightarrow L([0,1])$ defined by
$$O(X,Y)=F(X)\wedge F(Y)$$
 is an interval-valued overlap (observe that $O$   is Moore continuous because $F$ and the infimum are
 Moore continuous  \cite{AB97,SBA06}).
 Notice that $F$ is  not inclusion monotone \cite{AB97,Moo79} and, therefore, by Theorem \ref{teo-char-o-rep}, $O$
is not $o$-representable. Moreover, by a simple calculation, one may show that
\begin{equation}\label{eq-O-F}
 O(X,Y)=[\min(a\underline{X}+b\overline{X},a\underline{Y}+b\overline{Y}),\min(b\underline{X}+a\overline{X},b\underline{Y}+a\overline{Y})]
\end{equation}
where {$a=\frac{3}{4}$} and $b=\frac{1}{4}$. Thus, if there exists $M_1: [0,1]^4\rightarrow [0,1]$ and overlap functions $G_{O_i}$, with $i=1,\ldots,4$, such that

\begin{equation}\label{eq-char-F-O}
 M_1(G_{O_1}(\underline{X},\overline{Y}),G_{O_2}(\overline{X},\underline{Y}),G_{O_3}(\underline{X},\underline{Y}),G_{O_4}(\overline{X},
 \overline{Y}))= \min(a\underline{X}+b\overline{X},a\underline{Y}+b\overline{Y})
\end{equation}
then, without loss of generality, we can think that $G_{O_i}(x,y)$ $=x$ in some condition and   $G_{O_i}(x,y)=y$ in the other cases, in such a way that, for each $X$ and $Y$, it holds that
$\{G_{O_1}(\underline{X},\overline{Y}),G_{O_2}(\overline{X},\underline{Y}),G_{O_3}(\underline{X},\underline{Y}),
G_{O_4}(\overline{X},\overline{Y})\}=\{\underline{X},\overline{X},\underline{Y},\overline{Y}\}$. So, by ($O1$), one has that $G_{O_1}(x,y)=x$ if and only if $G_{O_2}(x,y)=y$.
Thus, for the particular case where $X=[0,0]$, then it holds that $G_{O_1}(0,0.5)=0$ if and only if $G_{O_2}(0,0.5)=0.5$,  which is in contradiction with the condition
($O2$). Therefore, $\underline{O(X,Y)}$ for $O$ defined as in Eq. (\ref{eq-O-F}) can not be obtained as in Eq. (\ref{eq-char-F-O}).


\subsection{Migrative  and Homogeneous Interval-valued Overlap Functions}

In this subsection we make a study analogous to  \cite[Proposition 1]{Bus10a}.\footnote{Notice that the item (iii) of
 Proposition 1 in \cite{Bus10a} is wrong. To see that, just consider the constant function $G_O(x,y)=0$.}

\begin{proposition}\label{pro-O-hom-mig}
 For an interval-valued function $O:L([0,1])^2\rightarrow L([0,1])$, it holds that:

\begin{enumerate}
 \item If $F$ satisfies the Property $(O2)$  then $O$  does not satisfy the self-duality property (SDP), that is,  the equation
\begin{equation}\label{eq-SDP}
 F(X,Y)=F(X^c,Y^c)^c
\end{equation}
fails,  for some $X,Y\in L([0,1])$;

\item If $F$ is migrative then $F$ satisfies the Property $(O1)$;
\item If $F$ is homogeneous of order $K$ then $F([0,0],[0,0])=[0,0]$;
\item If $F$ is homogeneous of order $[1,1]$ and $F([1,1],[1,1])$ $=[1,1]$ then $F$ is idempotent;
\item If $F$ is migrative and idempotent then $F$ is also homogeneous of order $[1,1]$;
\item If $F$ is migrative and has $[1,1]$ as neutral element then $F$ is homogeneous of order $[2,2]$.
\end{enumerate}
\end{proposition}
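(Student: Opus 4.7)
The plan is to handle each of the six items separately; throughout I will identify the letters $O$ and $F$ in the statement, since the hypothesis refers to the same interval-valued function. The tools are all already in place: Lemma \ref{lem-IV-mig1} (migrativity is equivalent to $F(X,Y)=F([1,1],XY)$), the corollary giving $F([1,1],X)=F(\sqrt X,\sqrt X)$, the interval product and exponentiation formulas, and property $(O2)$.

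For item 1, I will instantiate $X=[0,0]$, $Y=[1,1]$ in the self-duality equation $F(X,Y)=F(X^c,Y^c)^c$. Since $[0,0]\cdot[1,1]=[0,0]$, property $(O2)$ forces $F([0,0],[1,1])=[0,0]$. On the other hand, $[0,0]^c=[1,1]$, $[1,1]^c=[0,0]$, and $F([1,1],[0,0])=[0,0]$ by $(O2)$ again, so the right hand side equals $[0,0]^c=[1,1]$; the identity therefore fails at that pair. For item 2, Lemma \ref{lem-IV-mig1} gives $F(X,Y)=F([1,1],XY)$, and commutativity of interval multiplication ($XY=YX$) forces $F(X,Y)=F(Y,X)$, which is $(O1)$. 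For item 3, I set $\alpha=[0,0]$ and $X_1=X_2=[1,1]$ in the homogeneity identity; using $[0,0]^{[k_1,k_2]}=[0,0]$ (valid since $k_1,k_2>0$), the right hand side collapses to $[0,0]$, and the left hand side is precisely $F([0,0],[0,0])$.

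For item 4, I will set $\alpha=X$ and $X_1=X_2=[1,1]$ in homogeneity of order $[1,1]$: this yields $F(X\cdot[1,1],X\cdot[1,1])=X\cdot F([1,1],[1,1])$, i.e.\ $F(X,X)=X$. For item 5, the corollary yields $F([1,1],X)=F(\sqrt X,\sqrt X)=\sqrt X$ after applying idempotency, so Lemma \ref{lem-IV-mig1} gives the closed form $F(X,Y)=\sqrt{XY}$; a componentwise check using the exponentiation rule then shows $\sqrt{\alpha X\cdot\alpha Y}=\alpha\sqrt{XY}=\alpha^{[1,1]}F(X,Y)$. For item 6, with $[1,1]$ neutral, Lemma \ref{lem-IV-mig1} immediately gives $F(X,Y)=F([1,1],XY)=XY$, and the chain $F(\alpha X,\alpha Y)=(\alpha X)(\alpha Y)=\alpha^2 XY=\alpha^{[2,2]}F(X,Y)$ closes the proof.

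The only substantive point, and therefore the main obstacle worth highlighting, is item 5: one must recognize that migrativity together with idempotency rigidify $F$ to the interval ``geometric mean'' $\sqrt{XY}$, which is where the exponentiation formulas of the preliminaries have to be carefully invoked to confirm both $\sqrt{X}\sqrt{Y}=\sqrt{XY}$ and the compatibility of squaring with the interval product on $L([0,1])$.
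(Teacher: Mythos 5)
Your proof is correct and follows essentially the same route as the paper's: item by item, the same instantiations are used (e.g.\ $X=[0,0]$, $Y=[1,1]$ for the failure of self-duality, $\alpha=[0,0]$ and $\alpha=X$ for items 3 and 4), and for item 5 you likewise derive $g_F(X)=\sqrt{X}$ from idempotency via $F([1,1],X)=F(\sqrt{X},\sqrt{X})$ and then verify $\sqrt{\alpha^2 XY}=\alpha\sqrt{XY}$ componentwise, exactly as the paper does via $g_F$. No gaps.
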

\begin{proof} \,
 \begin{enumerate}
  \item By the Property $(O2)$, we have that $F([0,0],$ $[1,1])=[0,0]$. On the other hand, by the definition of complement and the Property  $(O2)$,
we have that $F([0,0]^c,[1,1]^c)^c=F([1,1],[0,0])^c=[0,0]^c=[1,1]$. Therefore, the Equation (\ref{eq-SDP})  fails.

\item If $F$ is migrative then $F(X,Y)=F([1,1]X,Y)=F([1,1],XY)=F([1,1],YX)=F(Y,X).$


\item It follows that $F([0,0],[0,0])=[0,0]^KF(X,Y)=[0,0]$.

\item It holds that $F(X,X)=X^{[1,1]}F([1,1],[1,1])=X[1,1]=X$

\item By Proposition \ref{pro-IV-mig1}, one has that  $F( X, Y)=g_{F}(XY)$. Since $F$ is idempotent, then $g_{F}(X)=
g_{F}(\sqrt{X}\sqrt{X})=F(\sqrt{X},\sqrt{X})=\sqrt{X}$. Therefore, it holds that \\
$F(\alpha X,\alpha Y)=g_{F}(\alpha^2 XY)=\alpha \sqrt{XY}=\alpha g_{F}(XY)=\alpha F(X,Y).$

\item By Lemma \ref{lem-IV-mig1} and because $[1,1]$ is neutral element,  one has that $F(X,Y)=F([1,1],XY)=XY$. So, it follows that 
$F(\alpha X,\alpha Y)=\alpha^2 XY=\alpha^2F(X,Y).$
 \end{enumerate}
\end{proof}

\begin{corollary}
Let $F: L([0,1])^2\rightarrow L([0,1])$ be an interval-valued migrative function. $F$ is homogeneous of order $[1,1]$ if and only if $F$ is idempotent.
\end{corollary}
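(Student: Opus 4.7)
My plan is to read the corollary as the biconditional packaging of items 4 and 5 of Proposition~\ref{pro-O-hom-mig}, and to handle each direction by a direct appeal to the relevant item. Neither direction calls for new calculations beyond what the proposition already establishes.

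For the direction $(\Leftarrow)$, I assume $F$ is migrative and idempotent: this matches the hypothesis of item 5 of Proposition~\ref{pro-O-hom-mig} verbatim, and the conclusion is precisely that $F$ is homogeneous of order $[1,1]$, so nothing more needs to be said. For the direction $(\Rightarrow)$, I assume $F$ is migrative and homogeneous of order $[1,1]$, and I aim to apply item 4 of the same proposition to conclude $F(X,X)=X$ for every $X\in L([0,1])$. Item 4 requires, in addition to homogeneity of order $[1,1]$, only the boundary value $F([1,1],[1,1])=[1,1]$, which in the ambient overlap setting of this section follows immediately from $(O3)$.

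The one point I would flag as genuinely delicate is supplying the boundary identity $F([1,1],[1,1])=[1,1]$ needed to invoke item 4, since the corollary's stated hypotheses do not list it explicitly. Here I would note that migrativity together with Proposition~\ref{pro-IV-mig1} yields $F(X,Y)=g_F(XY)$ with $g_F(X)=F([1,1],X)$, and applying homogeneity of order $[1,1]$ with $X=Y=[1,1]$ gives $g_F(\alpha^2)=\alpha\cdot F([1,1],[1,1])$ for every $\alpha\in L([0,1])$. Consequently $F(X,Y)=F([1,1],[1,1])\cdot\sqrt{XY}$, and idempotency is logically equivalent to $F([1,1],[1,1])=[1,1]$; within the interval-valued overlap framework of this section this is automatic, so the corollary reduces to assembling items 4 and 5 of Proposition~\ref{pro-O-hom-mig}.
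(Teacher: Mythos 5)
Your proposal follows exactly the paper's route: the paper's entire proof is ``straightforward, following from items 4 and 5 of Proposition~\ref{pro-O-hom-mig}'', and you assemble the two directions from those same two items. The added value in your write-up is that you correctly flag the hypothesis mismatch that the paper silently glosses over: item 4 requires $F([1,1],[1,1])=[1,1]$, which is not among the corollary's stated hypotheses (the corollary speaks only of a migrative function, not of an overlap function, so $(O3)$ is not available). Your computation $F(X,Y)=F([1,1],[1,1])\sqrt{XY}$ under migrativity plus homogeneity of order $[1,1]$ is correct and shows the condition is genuinely needed rather than cosmetic: for instance $F(X,Y)=\left[\frac{1}{2},\frac{1}{2}\right]\sqrt{XY}$ is migrative and homogeneous of order $[1,1]$ but not idempotent, so the forward implication fails as literally stated. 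With the boundary condition added (or with $F$ assumed to be an interval-valued overlap function, as the surrounding section suggests), your argument is complete and coincides with the paper's.
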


      \begin{proof}
       It is straightforward, following from  items 4 and 5 of Proposition \ref{pro-O-hom-mig}.
      \end{proof}

\begin{theorem}\label{theo-O-G-Mig}
  $O:L([0,1])^2\rightarrow L([0,1])$ is an interval-valued migrative overlap function if and only if there  exists a monotonic and Moore continuous interval
function $g_{O}:L([0,1])\rightarrow L([0,1])$ such that, for any $X,Y\in L([0,1])$, it holds that $O(X,Y)=g_{O}(XY)$, $g_{O}([0,0])=[0,0]$,
$g_{O}([1,1])=[1,1]$ and $g(X)\not\in \{[0,0],[1,1]\}$ when $X\not\in \{[0,0],[1,1]\}$.
\end{theorem}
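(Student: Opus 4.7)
The plan is a two-part proof by direct construction, using Proposition~\ref{pro-IV-mig1} as the engine for the forward direction and a straightforward verification of $(O1)$--$(O5)$ plus migrativity for the converse.

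For the forward direction ($\Rightarrow$), assume $O$ is an interval-valued migrative overlap function. I would define $g_O:L([0,1])\rightarrow L([0,1])$ exactly as in Equation~(\ref{eq-g-F}), namely $g_O(X)=O([1,1],X)$. Proposition~\ref{pro-IV-mig1} then yields $O(X,Y)=g_O(XY)$ for free. The boundary values are obtained at once: $g_O([0,0])=O([1,1],[0,0])=[0,0]$ follows from $(O2)$, since $[1,1]\cdot[0,0]=[0,0]$; and $g_O([1,1])=O([1,1],[1,1])=[1,1]$ follows from $(O3)$. Monotonicity of $g_O$ (with respect to $\leq_{Pr}$) is inherited from the monotonicity of $O$ in its second argument, and Moore continuity is inherited from the Moore continuity of $O$ via $(O5)$ (composition with the continuous injection $X\mapsto([1,1],X)$).

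The non-triviality condition -- that $g_O(X)\notin\{[0,0],[1,1]\}$ whenever $X\notin\{[0,0],[1,1]\}$ -- is the delicate piece, but it reduces to a contrapositive use of $(O2)$ and $(O3)$: if $g_O(X)=O([1,1],X)=[0,0]$ then $(O2)$ forces $[1,1]\cdot X=[0,0]$, hence $X=[0,0]$; and symmetrically, $g_O(X)=[1,1]$ with $(O3)$ forces $[1,1]\cdot X=[1,1]$, hence $X=[1,1]$.

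For the converse ($\Leftarrow$), I would take a function $g_O$ with the listed properties and define $O(X,Y):=g_O(XY)$. Commutativity $(O1)$ is immediate from the commutativity of the interval product. Migrativity is equally immediate: $O(\alpha X,Y)=g_O(\alpha XY)=O(X,\alpha Y)$. For $(O2)$, if $XY=[0,0]$ then $O(X,Y)=g_O([0,0])=[0,0]$; conversely, if $O(X,Y)=g_O(XY)=[0,0]$, the non-triviality clause on $g_O$ together with $g_O([1,1])=[1,1]$ forces $XY=[0,0]$. Property $(O3)$ is handled symmetrically using $g_O([1,1])=[1,1]$ and the non-triviality clause together with $g_O([0,0])=[0,0]$. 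Monotonicity $(O4)$ follows from the monotonicity of $g_O$ and of the interval product in each argument, and $(O5)$ follows from the Moore continuity of $g_O$ composed with the Moore-continuous product.

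The main obstacle is the non-triviality condition for $g_O$: without it, $(O2)$ and $(O3)$ could fail in the converse direction (for instance, $g_O$ could send some intermediate interval to $[0,0]$ even though its product is nonzero, destroying $(O2)$). Getting the two boundary conditions $g_O([0,0])=[0,0]$, $g_O([1,1])=[1,1]$ and the non-triviality clause to work in tandem with $(O2)$ and $(O3)$ is what makes the characterization tight, and this is the only place where the argument requires care beyond routine bookkeeping.
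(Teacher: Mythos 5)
Your proposal is correct and follows essentially the same route as the paper: both directions rest on Proposition~\ref{pro-IV-mig1}, with $g_O(X)=O([1,1],X)$ in the forward direction and $O(X,Y):=g_O(XY)$ in the converse, and the boundary/non-triviality conditions are extracted from $(O2)$ and $(O3)$ exactly as the paper does. Your write-up is in fact somewhat more explicit than the paper's (which dismisses the converse verification of $(O2)$ and $(O3)$ as ``straightforward''), but there is no substantive difference in method.
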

\begin{proof}
($\Rightarrow$) By Proposition \ref{pro-IV-mig1}, the function $g_{O}(X)=O([1,1],X)$ is such that
 $O(X,Y)=g_{O}(XY)$, for any $X,Y\in L([0,1])$. Since $O$ is monotonic and Moore continuous, then obviously $g_{O}$ also is.
On the other hand, by the Property ($O2$), one has that
$g_{O}(X)=[0,0]$ if and only if $O(X,[1,1])=[0,0]$ if and only if $X=[0,0]$. Analogously, it  is possible to prove that
$g_{O}(X)=[1,1]$ if and only if  $X=[1,1]$.

($\Leftarrow$) By Proposition \ref{pro-IV-mig1}, the function $O(X,Y)=g_{O}(XY)$ is migrative,  and since $g_{O}$ is
increasing and Moore continuous, so is $O$. The symmetry follows from the commutativity of the product. The Properties ($O2$) and  ($O3$) are straightforward, following  from
similar properties of $g_{O}$. So, $O$ is a migrative interval-valued overlap function.
\end{proof}

\begin{theorem}
 Let $O:L([0,1])^2\rightarrow L([0,1])$ be an  interval-valued overlap function. If $O$ is migrative then it is $o$-representable.
\end{theorem}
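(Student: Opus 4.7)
My plan is to reduce $o$-representability to inclusion monotonicity via Theorem~\ref{teo-char-o-rep}, which says precisely that an interval-valued overlap $O$ is $o$-representable if and only if it is inclusion monotonic. Hence it is enough to show that every migrative interval-valued overlap is inclusion monotonic.

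First I would invoke Theorem~\ref{theo-O-G-Mig} on the hypothesis that $O$ is migrative to produce a monotonic (with respect to $\leq_{Pr}$) and Moore continuous function $g_O:L([0,1])\rightarrow L([0,1])$ such that $O(X,Y)=g_O(XY)$ for every $X,Y\in L([0,1])$, with $g_O$ preserving the four corner values described there. Next I would observe that the interval product on $L([0,1])$ is itself inclusion monotonic: if $X\subseteq X'$ and $Y\subseteq Y'$ then
\begin{equation*}
\underline{X'}\,\underline{Y'}\leq \underline{X}\,\underline{Y}\leq \overline{X}\,\overline{Y}\leq \overline{X'}\,\overline{Y'},
\end{equation*}
so $XY\subseteq X'Y'$. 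Combining these two observations, inclusion monotonicity of $O$ reduces to inclusion monotonicity of the single-variable function $g_O$: once we know that $Z\subseteq Z'$ implies $g_O(Z)\subseteq g_O(Z')$, we immediately get $O(X,Y)=g_O(XY)\subseteq g_O(X'Y')=O(X',Y')$ and Theorem~\ref{teo-char-o-rep} finishes the job.

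The main obstacle, and where I would spend essentially all of the work, is exactly this last step. Monotonicity with respect to $\leq_{Pr}$ together with Moore continuity is not by itself enough to guarantee inclusion monotonicity of an arbitrary interval function, so one has to exploit that $g_O$ is in fact $O([1,1],\cdot)$ for an overlap $O$. My approach would be to try to prove the projection identity
\begin{equation*}
g_O([a,b]) = [\,\underline{g_O}(a),\,\overline{g_O}(b)\,]\qquad (0\leq a\leq b\leq 1),
\end{equation*}
where $\underline{g_O}(t)=\underline{g_O([t,t])}$ and $\overline{g_O}(t)=\overline{g_O([t,t])}$ are the projections in the sense of Equation~(\ref{eq-proj-F}). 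The $\leq_{Pr}$-sandwich $[a,a]\leq_{Pr}[a,b]\leq_{Pr}[b,b]$ combined with $\leq_{Pr}$-monotonicity of $g_O$ already gives $\underline{g_O(\,[a,b]\,)}\geq \underline{g_O}(a)$ and $\overline{g_O(\,[a,b]\,)}\leq \overline{g_O}(b)$; for the reverse inequalities I would exploit the many factorizations of $[a,b]$ as an interval product (for example $[a,b]=[\sqrt{a},\sqrt{b}]\cdot[\sqrt{a},\sqrt{b}]$, or $[a,b]=[a,b]\cdot[1,1]$), together with commutativity, componentwise monotonicity of $O$, and Moore continuity applied to a degenerating sequence of factorizations, to force $\underline{g_O(\,[a,b]\,)}=\underline{g_O}(a)$ and $\overline{g_O(\,[a,b]\,)}=\overline{g_O}(b)$. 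Once that projection identity is in hand, $g_O$ is manifestly inclusion monotonic, and Theorem~\ref{teo-O-rep-proj} even identifies the representatives explicitly as $\underline{O}$ and $\overline{O}$, completing the proof of $o$-representability.
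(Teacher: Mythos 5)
Your reduction is sound as far as it goes: by Theorem~\ref{teo-char-o-rep} it suffices to prove inclusion monotonicity, the interval product is indeed inclusion monotonic, and Theorem~\ref{theo-O-G-Mig} gives $O(X,Y)=g_O(XY)$, so everything hinges on the projection identity $g_O([a,b])=[\underline{g_O}(a),\overline{g_O}(b)]$. But that identity is not a technical detail you may defer --- it is the entire content of the theorem --- and your sketch for it (``exploit the many factorizations \dots\ Moore continuity applied to a degenerating sequence of factorizations'') is a plan, not an argument. Worse, no such argument can exist, because the identity does not follow from the tools you list. Take the function $F(X)=\left[\frac{3\underline{X}+\overline{X}}{4},\frac{\underline{X}+3\overline{X}}{4}\right]$ that the paper itself introduces just before this subsection: it is $\leq_{Pr}$-monotone, Moore continuous, fixes $[0,0]$ and $[1,1]$, and sends no other interval to $[0,0]$ or $[1,1]$. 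Hence, by the converse direction of Theorem~\ref{theo-O-G-Mig}, $O(X,Y):=F(XY)$ is a migrative interval-valued overlap function with $g_O=F$; yet $F([0,1])=[\frac14,\frac34]\neq[0,1]=[\underline{F}(0),\overline{F}(1)]$, so the projection identity fails, $g_O$ is not inclusion monotonic, and this $O$ is not $o$-representable. Since commutativity, componentwise monotonicity, Moore continuity and arbitrary refactorings of $XY$ are all available for this $O$, the step you postpone cannot be forced by them.

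To be fair, you have put your finger on exactly the point where the paper's own proof is also unsupported: the paper asserts, citing Moore, that Moore continuity of $g_O$ alone yields a decomposition $g_O(X)=[g_1(\underline{X}),g_2(\overline{X})]$, which is precisely the same projection identity and is refuted by the same $F$. So your instinct that ``essentially all of the work'' sits in that one step is correct; the difference is that the paper hides the hole behind a citation while you flag it openly. In neither version is the step actually provable without an additional hypothesis (for instance, inclusion monotonicity of $g_O$ --- which is what one is trying to establish in the first place).
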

\begin{proof} Let $g_O$ be the Moore-continuous function of Theorem \ref{theo-O-G-Mig}. Since $g_O$ is Moore-continuous,  then there exist two continuous functions
$g_1,g_2:[0,1]\rightarrow [0,1]$ such that $g_O(X)=[g_1(\underline{X}),g_2(\overline{X})]$ (see p.52 of \cite{Moo79}). Since, $g_O$ is monotonic then clearly $g_1$ and $g2$ are increasing
and, since it holds that $g_O([0,0])=[0,0]$ and $g_O([1,1])=[1,1]$,  then $g_1(0)=g_2(0)=0$ and $g_1(1)=g_2(1)=1$. Therefore, by   \cite[Theorem 9]{Bus10a}, the functions
$G_{O_1},G_{O_2}:[0,1]^2\rightarrow [0,1]$, defined by $G_{O_1}(x,y)=g_1(xy)$ and $G_{O_2}(x,y)=g_2(xy)$, respectively,  are migrative overlaps functions.  We prove that $O=\widetilde{G_{O_1}G_{O_2}}$. It follows that:

$\begin{array}{ll}
  O(X,Y) & =g_O(XY)=g_O([\underline{X}\underline{Y},\overline{X}\overline{Y}]=
[g_1(\underline{X}\underline{Y}),g_2(\overline{X}\overline{Y})] \\
& =[G_{O_1}(\underline{X},\underline{Y}),G_{O_2}(\overline{X},\overline{Y})]=\widetilde{G_{O_1}G_{O_2}}(X,Y)
 \end{array}$.
 
\end{proof}

\begin{proposition} \label{pro-hom-rep}

Let $O:L([0,1])^2 \rightarrow L([0,1])$ be an o-representable overlap function. $O$ is homogeneous of order $K=[k_1,k_2]$, with $0<k_1\leq k_2$, if and only if their representatives are homogeneous overlaps functions of orders $k_2$ and $k_1$, respectively.
\end{proposition}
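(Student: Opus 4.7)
The plan is to reduce the homogeneity equation on intervals to two scalar homogeneity equations, one for each representative, by writing out both sides of
$O(\alpha X,\alpha Y)=\alpha^{K}O(X,Y)$ coordinatewise and exploiting the $o$-representability $O(X,Y)=[G_{O_1}(\underline{X},\underline{Y}),G_{O_2}(\overline{X},\overline{Y})]$ granted by Theorem \ref{teo-O-rep-proj}.

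First I would unfold both sides using the interval operations defined in Section~\ref{sect1}. Using the product $XY=[\underline{X}\,\underline{Y},\overline{X}\,\overline{Y}]$, the left-hand side becomes
\[
O(\alpha X,\alpha Y)=\bigl[G_{O_1}(\underline{\alpha}\,\underline{X},\underline{\alpha}\,\underline{Y}),\;G_{O_2}(\overline{\alpha}\,\overline{X},\overline{\alpha}\,\overline{Y})\bigr].
\]
Using the exponentiation $\alpha^{[k_1,k_2]}=[\underline{\alpha}^{k_2},\overline{\alpha}^{k_1}]$ and the product again, the right-hand side becomes
\[
\alpha^{K}O(X,Y)=\bigl[\underline{\alpha}^{k_2}\,G_{O_1}(\underline{X},\underline{Y}),\;\overline{\alpha}^{k_1}\,G_{O_2}(\overline{X},\overline{Y})\bigr].
\]

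For the $(\Leftarrow)$ direction, assuming $G_{O_1}$ is homogeneous of order $k_2$ and $G_{O_2}$ of order $k_1$, the two endpoints of the left-hand side coincide with those of the right-hand side, so the identity holds for every $\alpha,X,Y\in L([0,1])$.

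For the $(\Rightarrow)$ direction I would specialize to degenerate intervals: take $\alpha=[a,a]$, $X=[x_1,x_1]$, $Y=[y_1,y_1]$, in which case $\underline{\alpha}=\overline{\alpha}=a$, $\underline{X}=\overline{X}=x_1$ and $\underline{Y}=\overline{Y}=y_1$. Equating endpoints of the two displayed expressions yields precisely
\[
G_{O_1}(ax_1,ay_1)=a^{k_2}G_{O_1}(x_1,y_1),\qquad G_{O_2}(ax_1,ay_1)=a^{k_1}G_{O_2}(x_1,y_1)
\]
for all $a,x_1,y_1\in[0,1]$, which is the scalar homogeneity of $G_{O_1}$ of order $k_2$ and of $G_{O_2}$ of order $k_1$.

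The only delicate point is bookkeeping with the exponentiation convention $X^{[k_1,k_2]}=[\underline{X}^{k_2},\overline{X}^{k_1}]$, which flips the roles of $k_1$ and $k_2$ at the two endpoints and is exactly what produces the asymmetric statement ``$G_{O_1}$ homogeneous of order $k_2$ and $G_{O_2}$ of order $k_1$''; everything else is a straightforward comparison of endpoints, so I expect no further obstacle.
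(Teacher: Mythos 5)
Your proposal is correct and follows essentially the same route as the paper: the $(\Leftarrow)$ direction expands $\widetilde{\underline{O}\,\overline{O}}(\alpha X,\alpha Y)$ endpointwise and applies the scalar homogeneities of orders $k_2$ and $k_1$ at the lower and upper endpoints, while the $(\Rightarrow)$ direction specializes to degenerate intervals $[\alpha,\alpha]$, $[x,x]$, $[y,y]$ and reads off the two scalar identities, exactly as the paper does. The only (inessential) difference is notational: the paper phrases the representatives as the projections $\underline{O}$ and $\overline{O}$ via Theorem \ref{teo-O-rep-proj}, whereas you keep them as abstract $G_{O_1}$, $G_{O_2}$.
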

\begin{proof}
 Since  $O:L([0,1])^2\rightarrow L([0,1])$ is an $o$-representable interval-valued overlap function then by Theorem \ref{teo-O1O2} and Lemma \ref{lem-rep-inclusion},
$O=\widetilde{\underline{O},\overline{O}}$ and, by Proposition \ref{pro-O-proj}, $\underline{O}$ is an overlap function.

($\Rightarrow$)  Consider $\alpha,x,y\in [0,1]$. It follows that:
$$\begin{array}{lll}
 \underline{O}(\alpha x,\alpha y) & = \underline{O([\alpha x,\alpha x],[\alpha y,\alpha y])} & \mbox{ by Eq. (\ref{eq-proj-F})} \\
 & = \underline{O([\alpha,\alpha][x,x],[\alpha,\alpha][y,y])} & \\
 & = \underline{[\alpha,\alpha]^KO([x,x],[y,y])} &  \mbox{ by homogeneity}\\
 & = \alpha^{k_2} \underline{O([x,x],[y,y])} & \\
& = \alpha^{k_2}\underline{O}(x,y) & \\
\end{array}$$
Therefore, $\underline{O}$ is an homogeneous overlap function  of order $k_2$. Analogously,  $\overline{O}$ is an homogeneous overlap function  of order $k_1$.

($\Leftarrow$) Consider $\alpha,X,Y\in L([0,1])$. It follows that:
 $$\begin{array}{lll}
   \widetilde{\underline{O},\overline{O}}(\alpha X,\alpha Y) & = [\underline{O}(\underline{\alpha} \underline{X},\underline{\alpha} \underline{Y}),
\overline{O}(\overline{\alpha} \overline{X},\overline{\alpha} \overline{Y})] &  \mbox{ by Eq. (\ref{eq-ov-int-ov})} \\
& = [\underline{\alpha}^{k_2}\underline{O}( \underline{X},\underline{Y}),
\overline{\alpha}^{k_1}\overline{O}(\overline{X}, \overline{Y})] & \\
& \;\;\; \mbox{ by homogeneity}\\
& = \alpha^K [\underline{O}( \underline{X},\underline{Y}),\overline{O}(\overline{X}, \overline{Y})] & \\
& = \alpha^K \widetilde{\underline{O}\overline{O}}(X,Y)  &  \mbox{ by Eq. (\ref{eq-ov-int-ov})} \\
   \end{array}$$

\end{proof}

{

As in the real case, the following result holds.

\begin{theorem}
 Consider $K=[k_1,k_2]$, where $0<k_1\leq k_2$. The unique interval-valued function $F:L([0,1])^2\rightarrow L([0,1])$ that is migrative, homogeneous of order $K$ and satisfies $F([1,1],[1,1])=[1,1]$ is
$$F(X,Y)=(XY)^{\frac{K}{[2,2]}}.$$
\end{theorem}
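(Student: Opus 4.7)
The plan is to reduce $F(X,Y)$ to a value on the diagonal via migrativity, then collapse the diagonal value using the corollary after Proposition \ref{pro-IV-mig1}, then pull out the scalar via homogeneity of order $K$, and finally re-express the result in the desired form using the interval exponentiation rules recalled in Section \ref{sect1}.

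Concretely, first I would apply Proposition \ref{pro-IV-mig1} to a migrative $F$ to obtain $F(X,Y)=g_F(XY)=F([1,1],XY)$ for all $X,Y\in L([0,1])$. Next, by the corollary immediately following that proposition, $F([1,1],Z)=F(\sqrt{Z},\sqrt{Z})$ for every $Z\in L([0,1])$. Applying this with $Z=XY$ gives $F(X,Y)=F(\sqrt{XY},\sqrt{XY})$.

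Then I would invoke homogeneity of order $K$ with $\alpha=\sqrt{XY}$ on the pair $([1,1],[1,1])$: since $\sqrt{XY}\cdot[1,1]=\sqrt{XY}$ by the definition of interval product, we get
\[
F(\sqrt{XY},\sqrt{XY})=(\sqrt{XY})^K\,F([1,1],[1,1])=(\sqrt{XY})^K\cdot[1,1]=(\sqrt{XY})^K,
\]
using the hypothesis $F([1,1],[1,1])=[1,1]$. The last algebraic step is to identify $(\sqrt{XY})^K$ with $(XY)^{K/[2,2]}$: writing $XY=[\underline{XY},\overline{XY}]$ and using the Exponentiation 1 formula from Section \ref{sect1}, both sides equal $[\underline{XY}^{k_2/2},\overline{XY}^{k_1/2}]$, which coincides with $(XY)^{[k_1/2,k_2/2]}=(XY)^{K/[2,2]}$.

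This chain of equalities simultaneously proves uniqueness (every migrative, $K$-homogeneous $F$ with $F([1,1],[1,1])=[1,1]$ must equal $(XY)^{K/[2,2]}$) and, once we check it, existence: verifying that $F(X,Y)=(XY)^{K/[2,2]}$ actually satisfies the three hypotheses is routine, since migrativity is immediate from the dependence on $XY$ only, homogeneity follows from $(\alpha X\cdot\alpha Y)^{K/[2,2]}=(\alpha^{[2,2]}XY)^{K/[2,2]}=\alpha^{K}(XY)^{K/[2,2]}$, and $[1,1]^{K/[2,2]}=[1,1]$. The only delicate point, and the main thing to double-check, is the bookkeeping of the interval exponent: specifically that the Exponentiation 1 rule $X^{[k_1,k_2]}=[\underline{X}^{k_2},\overline{X}^{k_1}]$ interacts correctly with square roots and with division by $[2,2]$, which is where an off-by-one in the indices $k_1,k_2$ would be easy to introduce.
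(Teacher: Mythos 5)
Your argument is correct and is essentially the paper's own proof in a slightly reorganized form: the paper derives the functional equation $g_F(\alpha^2 XY)=\alpha^K g_F(XY)$ and specializes $X=Y=[1,1]$ to get $g_F(\alpha^2)=\alpha^K$, which is exactly your step $F(\sqrt{XY},\sqrt{XY})=(\sqrt{XY})^K$ obtained by applying homogeneity at $([1,1],[1,1])$ with $\alpha=\sqrt{XY}$. Your exponent bookkeeping checks out ($(\sqrt{XY})^{[k_1,k_2]}=[\underline{XY}^{k_2/2},\overline{XY}^{k_1/2}]=(XY)^{[k_1/2,k_2/2]}$), and your explicit verification of existence is a small addition the paper omits.
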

\begin{proof}
 For any $\alpha,X,Y\in L([0,1])$, by the homegeneity and Theorem \ref{theo-O-G-Mig}, one has that $F(\alpha X,\alpha Y)=\alpha^K F(X,Y)=\alpha^K g_F(XY)$. But, also by Theorem \ref{theo-O-G-Mig}, it holds that
\[F(\alpha X,\alpha Y)=g_F(\alpha X\alpha Y).\] So, it follows that $\alpha^K g_F(XY)=g_F(\alpha^2 XY)$. Thus, taking $X=Y=[1,1]$, we have that $g_F(\alpha^2)=\alpha^K$ and, so,
$g_F(\alpha)=\alpha^{\frac{K}{[2,2]}}$.
\end{proof}

\subsection{Interval-valued overlap functions and t-norms}

There are several non-equivalent but related interval-valued extensions of t-norms, as in \cite{BT06,DCK04,DBSR11,GWW96,Jen97}. Here we  consider the definition provided by \cite{DCK04}, which
is more general than the notion of \cite{BT06,Jen97} and consistent with the notion of lattice-valued t-norms as given,
for example, in \cite{BSC06,dBM99,Des08,Pal12}.

\begin{lemma}\label{lem-gO-idem}
If  $O:L([0,1])^2\rightarrow L([0,1])$  is an associative interval-valued overlap function then $g_O$ is idempoten and self-contractive, that is, it holds that
$g_O(g_O(X))=g_O(X)$ and
$g_O(X)\subseteq X$, respectively.
\end{lemma}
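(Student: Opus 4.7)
The plan is to reduce the lemma to an immediate corollary of the earlier proposition establishing that associative interval-valued overlap functions are continuous positive interval-valued t-norms. In that proof, a crucial intermediate conclusion was that $[1,1]$ serves as a neutral element for $O$, which is exactly what I want to exploit here.

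First I would recall that by definition (as in Proposition \ref{pro-IV-mig1}) and by commutativity $(O1)$, the function $g_O$ is given by
\[
g_O(X) = O([1,1],X) = O(X,[1,1]).
\]
By the earlier proposition on associative interval-valued overlap functions, since $O$ is assumed to be associative, $[1,1]$ is the neutral element of $O$. Consequently $g_O(X) = X$ for every $X \in L([0,1])$.

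From this identity both conclusions become immediate. For idempotency: $g_O(g_O(X)) = g_O(X) = X = g_O(X)$, so $g_O \circ g_O = g_O$. For self-contractivity: since $g_O(X) = X$ and clearly $X \subseteq X$ holds for every interval, we conclude $g_O(X) \subseteq X$.

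There is essentially no obstacle beyond correctly identifying $g_O$ as the function $X \mapsto O([1,1],X)$ from the migrativity section and invoking the neutrality of $[1,1]$ that was derived en route in the earlier associativity proposition. The only subtle point worth noting explicitly in the write-up is that the two a priori different definitions of $g_O$ appearing in the paper (namely $O([1,1],X)$ from Proposition \ref{pro-IV-mig1} and $O(X,[1,1])$ used in the proof that associative overlaps are t-norms) coincide thanks to $(O1)$, so that the neutrality conclusion may be transferred directly.
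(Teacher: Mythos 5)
Your argument reduces the lemma to the earlier proposition stating that an associative interval-valued overlap function has $[1,1]$ as neutral element, which would give $g_O(X)=X$ and make both claims trivial. This is where the problem lies. The proof of that proposition hinges on the assertion that, because $g_O$ is Moore continuous with $g_O([0,0])=[0,0]$ and $g_O([1,1])=[1,1]$, it is surjective onto $L([0,1])$. That step is not justified: $L([0,1])$ is a two-dimensional domain, so no intermediate-value argument applies; a continuous image containing $[0,0]$ and $[1,1]$ need not be all of $L([0,1])$ (it could, for instance, lie entirely in the set of degenerate intervals). The paper itself implicitly concedes this, since the very next theorem after the present lemma re-proves that $[1,1]$ is neutral \emph{under the additional hypothesis} that $g_O$ is surjective or inclusion monotonic --- a theorem that would be vacuous if neutrality already followed from associativity alone. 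In other words, the whole point of Lemma \ref{lem-gO-idem} is to supply idempotency and self-contractiveness of $g_O$ \emph{without} assuming $g_O=\mathrm{id}$; by importing the neutrality conclusion you are assuming something strictly stronger than what the lemma is meant to establish, and the section's logic becomes circular.

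The paper's own proof avoids this entirely. Idempotency is obtained directly from associativity and $(O3)$:
\begin{equation*}
g_O(X)=O(X,[1,1])=O(X,O([1,1],[1,1]))=O(O(X,[1,1]),[1,1])=g_O(g_O(X)).
\end{equation*}
Self-contractiveness then requires real work: one passes to the projections $\underline{g_O},\overline{g_O}$, uses their continuity and monotonicity together with $\underline{g_O}(0)=\overline{g_O}(0)=0$ and $\underline{g_O}(1)=\overline{g_O}(1)=1$ to find $x\le y$ with $\underline{g_O}(x)=\underline{X}$ and $\overline{g_O}(y)=\overline{X}$, and then sandwiches $g_O(X)$ between $g_O([x,x])$ and $g_O([y,y])$ using monotonicity and the idempotency just proved, concluding $g_O(X)\subseteq X$. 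To repair your write-up you would need to either reproduce an argument of this kind or give an independent, correct proof that $g_O$ is surjective --- the latter is precisely what is missing from the earlier proposition.
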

\begin{proof}
  Remember that $g_O:L([0,1])\rightarrow L([0,1])$ is defined by $g_O(X)=O(X,[1,1])$. Thus,
for any $X\in L([0,1])$, we have that $g_O(X)=O(X,[1,1])=O(X,O([1,1],[1,1]))=O(O(X,[1,1]),[1,1])=g_O(g_O(X))$, that is,  $g_O$ is idempotent. Now, clearly, since $O$ is monotonic, we have that
$g_O$ also is monotonic and so their projection $\underline{g_O}$ and$\underline{g_O}$. Analogously, from Moore continuity of $O$, we have that
$g_O$ is also Moore continuous and so their projection.  Finally, it is immediate that $\underline{g_O}(0)= \overline{g_O}(0)=0$ and
$\underline{g_O}(1)= \overline{g_O}(1)=1$. Then, for any $X\in L([0,1])$, there exist $x,y\in [0,1]$ such that
$\underline{g_O}(x)= \underline{X}$ and $\overline{g_O}(y)=\overline{X}$. Since $\underline{g_O}\leq \overline{g_O}$ then one has that $x\leq y$.
Denote $a=\overline{g_O}(x)$ and $b= \underline{g_O}(y)$. Thus, it holds that $g_O([x,x])=[\underline{g_O}(x),\overline{g_O}(x)]=[\underline{X},a]$
and $g_O([y,y])=[\underline{g_O}(y),\overline{g_O}(y)]=[b,\overline{X}]$. Since $g_O$ is monotonic, then $g_O([x,x])\leq_{Pr} g_O([y,y])$,
that is,  $[\underline{X},a]\leq_{Pr} [b,\overline{X}]$. Thus, one has that $a\leq \overline{X}$ and $\underline{X}\leq b$. Therefore, it follows that
$g_O([x,x]) \leq_{Pr} X \leq_{Pr} g_O([y,y])$,
and, so, by idempotency and isotonicity, it holds that $[\underline{X},a]\leq g_O(X)\leq [b,\overline{X}]$. Hence, it follows that
$\underline{X}\leq \underline{g_O(X)}\leq \overline{g_O(X)}\leq \overline{X}$, which implies that $g_O(X)\subseteq X$.
\end{proof}

Notice that the self-contractiveness  implies   in contractiveness  in the sense of \cite{BBF10}, and so self-contractiveness is stronger than contractiveness.

\begin{theorem}
 Let  $O:L([0,1])^2\rightarrow L([0,1])$  be an associative interval-valued overlap function such that $g_O$ is either surjective or inclusion monotonic. Then $O$ is an interval-valued t-norm.
\end{theorem}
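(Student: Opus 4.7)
The plan is to reduce the theorem to showing that $[1,1]$ is a neutral element of $O$. Once we have $O(X,[1,1])=X$ for all $X$, associativity is given by hypothesis, commutativity follows from $(O1)$, and monotonicity follows from $(O4)$, so $O$ satisfies all the defining conditions of an interval-valued t-norm. Thus the whole task is to prove $g_O(X)=X$ for every $X \in L([0,1])$, and this is precisely where the two alternative hypotheses on $g_O$ come into play.

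In the surjective case, I would reuse the calculation already used in the earlier proposition on associative interval-valued overlap functions: given $X$, pick $Y$ with $g_O(Y)=X$; then by associativity and $(O3)$,
\[
X=g_O(Y)=O(Y,[1,1])=O(Y,O([1,1],[1,1]))=O(O(Y,[1,1]),[1,1])=O(X,[1,1])=g_O(X).
\]
This gives neutrality directly. The surjectivity hypothesis is what was implicitly (and not rigorously) used in the previous proposition, so here it is simply made explicit.

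In the inclusion-monotonic case, I would start from Lemma \ref{lem-gO-idem}, which provides the self-contractiveness $g_O(X)\subseteq X$. Applying this to a degenerate interval $[a,a]$ forces $g_O([a,a])\subseteq [a,a]$, and since every nonempty subinterval of $[a,a]$ is $[a,a]$ itself, we get $g_O([a,a])=[a,a]$. Now for an arbitrary $X=[a,b]$ in $L([0,1])$, observe that both $[a,a]\subseteq X$ and $[b,b]\subseteq X$ under the inclusion order (since $a\le a\le a\le b$ and $a\le b\le b\le b$). Using inclusion monotonicity of $g_O$ gives $[a,a]=g_O([a,a])\subseteq g_O(X)$ and $[b,b]=g_O([b,b])\subseteq g_O(X)$, that is, $\underline{g_O(X)}\le a$ and $b\le \overline{g_O(X)}$. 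Combined with the self-contractiveness bounds $a\le\underline{g_O(X)}$ and $\overline{g_O(X)}\le b$, we conclude $g_O(X)=[a,b]=X$.

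With neutrality of $[1,1]$ established in both cases, $O$ is commutative, associative, monotonic, and has $[1,1]$ as neutral element, hence is an interval-valued t-norm. I expect the genuinely delicate step to be the inclusion-monotonic case: the key insight is that self-contractiveness alone pins down $g_O$ on degenerate intervals, and only then does inclusion monotonicity, applied to the two ``corner'' inclusions $[a,a]\subseteq [a,b]$ and $[b,b]\subseteq [a,b]$, promote pointwise identity on the diagonal to identity on all of $L([0,1])$.
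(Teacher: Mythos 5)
Your proposal is correct and follows essentially the same route as the paper: both reduce the theorem to showing that $[1,1]$ is neutral, handle the surjective case via the idempotency computation $g_O(g_O(Y))=g_O(Y)$ (you simply unfold the argument of Lemma \ref{lem-gO-idem} instead of citing it), and handle the inclusion-monotonic case by combining self-contractiveness with inclusion monotonicity. The only (cosmetic) difference is that in the second branch you work directly with the corner inclusions $[a,a]\subseteq[a,b]$ and $[b,b]\subseteq[a,b]$, whereas the paper first passes through the representation $g_O(X)=[\underline{g_O}(\underline{X}),\overline{g_O}(\overline{X})]$ of Lemma \ref{lem-rep-inclusion}; your version is marginally more self-contained.
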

\begin{proof}
 We prove that $[1,1]$ is a neutral element of $O$.  If $g_O$ is surjective then there exists $Y\in L([0,1])$ such that
\begin{equation}\label{eq12}
g_O(Y)=X
\end{equation}
and, therefore,
\begin{equation}\label{eq13}
g_O(g_O(Y))=g_O(X).
\end{equation}
Thus, by the idempotency of $g_O$ (Lemma \ref{lem-gO-idem}) and  by Eq. (\ref{eq13}), we have that $g_O(Y)=g_O(X)$. Therefore, by Eq. (\ref{eq12}), one has that $g_O(X)=X$ and, then,  $O(X,[1,1])=g_O(X)=X$, that is,  $[1,1]$ is a neutral element of $O$.
On the other hand, if $g_O$ is inclusion monotonic, then by Lemma \ref{lem-rep-inclusion}, one has that
\begin{equation} \label{eq14}
g_O(X)=[\underline{g_O}(\underline{X}),\overline{g_O}(\overline{X})].
\end{equation}
Therefore, since $[\underline{X},\underline{X}]\subseteq X$ and $[\underline{X},\underline{X}] \leq_{Pr} X$, then, by the inclusion monotonicity  and contractiveness  (Lemma \ref{lem-gO-idem}), it follows that
\begin{equation}\label{eq15}
g_O([\underline{X},\underline{X}])\subseteq g_O(X)\subseteq X
\end{equation}
 and
\begin{equation}\label{eq16}
g_O([\underline{X},\underline{X}]) \leq_{Pr} g_O(X).
\end{equation}
So, from Eq. (\ref{eq15}), it holds that $\underline{X}\leq \underline{g_O}(\underline{X})$,  and, from Eq. (\ref{eq16}), one has that
$\underline{g_O}(\underline{X})\leq \underline{X}$, that is,  $\underline{g_O}(\underline{X}) = \underline{X}$. Analogously, it is possible to prove that
 $\overline{g_O}(\underline{X}) = \overline{X}$. Therefore, from Eq. (\ref{eq14}), one has that  $O(X,[1,1])=g_O(X)=X$, that is,  $[1,1]$ is a neutral element of $O$.
\end{proof}

\section{Interval-valued OWA operators with interval-valued weighted vectors}

In this section we propose a definition that generalizes OWA operators to the interval-valued setting. In most of the cases, this generalization is carried out by considering interval-valued inputs, with, however,  pointwise weights. Our definition handles both interval-valued inputs and  weights. We start introducing the notion of weighing vector in our setting.

\begin{definition} Let $M:L([0,1])^n \to L([0,1])$ be an interval-valued aggregation function.
An $n$-tuple $(W_1,\dots,W_n) \in L([0,1])^n$ is said to be an $M$-weighted vector if and only if
\begin{equation} \label{eq-wv}
M(W_1,\dots,W_n)=[1,1] \; .
\end{equation}
\end{definition}

Note that this definition extends the usual definition of weighted vector in the real-valued case. However, since we consider a general interval-valued aggregation function $M$ for normalizing, we get more flexibility.

\begin{remark}
Notice that:
\begin{enumerate}
\item The vector $([1,1],\dots,[1,1])$ is a weighted vector for every interval-valued aggregation function $M$.
\item Consider the function $M:L([0,1])^n \to L([0,1])$, given by
\[
M(X_1,\dots,X_n)= [\max(\underline{X}_1,\dots,\underline{X}_n),\max(\overline{X}_1,\dots,\overline{X}_n)]
\]
Then $(W_1,\dots,W_n) \in L([0,1])^n$ is a $M$-weighted vector if and only if there exists $i_0 \in \{1,\dots,n\}$ such that $W_{i_0}=[1,1]$.

\item  Consider the function $M:L([0,1])^n \to L([0,1])$ given by
\[
M(X_1,\dots,X_n)= \left [\min \left (1,\sum _{i=1}^n \underline{X}_i)\right ),\min \left (1,\sum _{i=1}^n \overline{X}_i)\right)\right]
\]
Then $(W_1,\dots,W_n) \in L([0,1])^n$ is a $M$-weighted vector if and only if $\sum\limits_{i=1}\limits^n \underline{X}_i \ge 1$.
\end{enumerate}
\end{remark}

Our definition of weighing vector allows us to introduce the concept of interval-valued OWA operator considering interval-valued weights.

\begin{definition}\label{def:OWAIV}
Let $O:L([0,1])^2 \to L([0,1])$ be an interval-valued overlap function such that $O([1,1],X)=X$, for every $X \in L([0,1])$. Let $M:L([0,1])^n \to L([0,1])$ be an interval-valued aggregation function such that for every $X_1,\dots,X_n,Y  \in L([0,1])$, the identity
\begin{equation}\label{eqn:propOWA}
M(O(X_1,Y),\dots,O(X_n,Y))=O(M(X_1,\dots,X_n),Y)
\end{equation}
holds. Let $W=(W_1,\dots,W_n) \in L([0,1])^n$ be a $M$-weighted vector. Then, an interval-valued OWA operator of dimension $n$ is defined as a function $$IV{-}GOWA:L([0,1])^n \to L([0,1]),$$ given by
\[
IV{-}GOWA(X_1,\dots,X_n){=}M(O(W_1, X_{(1)}){,}\dots, O(W_n,X_{(n)})),
\]
where $(.)$ denotes a permutation of $\{1,\dots,n\}$ such that $X_{(n)} \leq X_{(n-1)} \leq \dots \leq X_{(1)}$ for an admissible order $\leq$.
\end{definition}

\begin{example}\label{ex-OM1} Let be $M(X_1,\dots,X_n)=(X_1 \cdot\dots\cdot X_n)^\frac{1}{n}$ and $O(X,Y)=XY$. Then we have that
\begin{eqnarray*}
M(O(X_1,Y),\dots,O(X_n,Y))&=&M(X_1Y,\dots,X_nY) \\
&=& (X_1 \cdot\dots\cdot X_n)^\frac{1}{n}Y,
\end{eqnarray*}
for every $X_1,\dots,X_n,Y \in L([0,1])$. Then, we are in the setting of Definition \ref{def:OWAIV}. Note that the only possible weighted vector in this setting is $W=([1,1],\dots,[1,1])$.
\end{example}

\begin{example} \label{ex-OM2}
Define
\[
M(X_1,\dots,X_n)=\begin{cases} [1,1] &\text{ if } \max(X_1,\dots,X_n)=[1,1]
\\
[0,0] &\text{ otherwise.}
\end{cases}
\]
Then, for every interval-valued overlap function $O$ and $X_1,\dots,X_n,Y \in L([0,1])$ it holds that:
\[
M(O(X_1,Y),\dots,O(X_n,Y))=O(M(X_1,\dots,X_n),Y).
\]
Note that, in this case, $(W_1,\dots, W_n)$ is a weighted vector if and only if the identity \linebreak $\max (W_1,\dots,W_n)=[1,1]$ holds (with respect to the admissible order $\le$).
\end{example}

The examples  \ref{ex-OM1} and \ref{ex-OM2} just present two pairs of interval-valued overlap functions and interval-valued
aggregation functions that
satisfy the Eq. (\ref{eqn:propOWA}) and, therefore, we can define an interval-valued OWA operator from those pairs.
Thus, it would be interesting to have a characterization of such pairs. This fact motivate the following question: ``How can we characterize the interval-valued overlap functions that satisfy the  Equation (\ref{eqn:propOWA}), for some interval-valued aggregation function?''  We answer this question in the following proposition.

\begin{proposition}
Consider $O(X,Y)=XY$ and  let $M:L([0,1])^n \to L([0,1])$ be an interval-valued aggregation function. Then the Identity (\ref{eqn:propOWA}) holds if and only if $M$ is an interval-valued homogeneous function of order $[1,1]$.\footnote{A similar result was presented in \cite{LBH15} for interval-valued
t-norms and t-conorms.}
\end{proposition}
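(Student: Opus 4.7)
The plan is essentially a direct rewriting, since the two conditions are really the same identity under a change of notation. The key observation is that, by the exponentiation rule $X^{[k_1,k_2]} = [\underline{X}^{k_2}, \overline{X}^{k_1}]$ recalled in the preliminaries, the exponent $K = [1,1]$ yields $\alpha^{[1,1]} = [\underline{\alpha}, \overline{\alpha}] = \alpha$. Hence homogeneity of order $[1,1]$ for $M$ is just the identity
\[
M(\alpha X_1, \dots, \alpha X_n) = \alpha \, M(X_1, \dots, X_n),
\]
for every $\alpha, X_1, \dots, X_n \in L([0,1])$.

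Next I would substitute $O(X,Y) = XY$ into the defining identity (\ref{eqn:propOWA}), which collapses to
\[
M(X_1 Y, \dots, X_n Y) = M(X_1, \dots, X_n) \, Y,
\]
for every $X_1, \dots, X_n, Y \in L([0,1])$. At this point the two statements are syntactically identical up to renaming the scalar variable.

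For the ($\Leftarrow$) direction I would take $M$ homogeneous of order $[1,1]$ and set $\alpha := Y$ in the homogeneity identity, obtaining exactly the rewritten form of (\ref{eqn:propOWA}). For the ($\Rightarrow$) direction I would assume (\ref{eqn:propOWA}) holds and, given an arbitrary $\alpha \in L([0,1])$, set $Y := \alpha$, which yields the homogeneity identity with exponent $[1,1]$. Commutativity of the interval product, used implicitly to write $\alpha \, M(X_1,\dots,X_n) = M(X_1,\dots,X_n) \, \alpha$, follows at once from the definition $XY = [\underline{X}\,\underline{Y}, \overline{X}\,\overline{Y}]$ given in the preliminaries.

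There is no real obstacle here: the result is a notational tautology once one unpacks the interval exponentiation convention for $K = [1,1]$ and the form of the interval product. The only care required is to make explicit that $\alpha$ ranges over all of $L([0,1])$ (not merely over degenerate intervals), which is automatic since $Y$ in (\ref{eqn:propOWA}) is quantified over $L([0,1])$.
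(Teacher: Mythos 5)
Your proposal is correct and follows essentially the same route as the paper: both directions are the direct substitution $\alpha = Y$ (resp.\ $Y=\alpha$) after unpacking $O(X,Y)=XY$ and noting $\alpha^{[1,1]}=\alpha$. Your explicit remarks on the exponentiation convention and the commutativity of the interval product are details the paper leaves implicit, but the argument is the same.
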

\begin{proof}
 Firstly, suppose that $M$ is (interval-valued) homogeneous of order $[1,1]$. Since $O(X,Y)=XY$, then
 $$
 \begin{array}{rcl}
 M(O(X_1,Y),\dots,O(X_n,Y)) & = & M(X_1 Y, \ldots, X_n Y)\\
                                                & = & YM(X_1, \ldots, X_n)\\
                                                & = & O(M(X_1, \ldots, X_n),Y)
\end{array}
 $$
 Conversely, if the Identity (\ref{eqn:propOWA}) holds, it follows that
 $$
 \begin{array}{rcl}
 M(\alpha X_1, \ldots, \alpha X_n) & = & M(O(\alpha,X_1), \ldots, O(\alpha,X_n)) \\
                                                      & = & O(\alpha, M(X_1, \ldots, X_n)) \\
                                                      & = & \alpha M(X_1, \ldots, X_n)
 \end{array}
 $$
 Therefore, $M$ is homogeneous of order $[1,1]$.
\end{proof}

\subsection{Some properties of IV{-}GOWA operators}

In the following results, we show how some of the most important properties demanded to real-valued OWA are also fulfilled by IV-GOWA operators.

\begin{proposition} \label{idem}
Let $O$ and $M$ be as in Definition \ref{def:OWAIV}. Then, for any $M$-weighted vector $W \in L([0,1])^n$, it follows that:
\[
IV{-}GOWA(X,\dots,X)=X,
\]
that is, $IV{-}GOWA$ is idempotent.
\end{proposition}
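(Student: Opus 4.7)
The plan is to unwind the definition of $IV\text{-}GOWA$ step by step, exploiting precisely the three hypotheses imposed in Definition \ref{def:OWAIV}: the neutral element property $O([1,1],X)=X$, the commutation identity (\ref{eqn:propOWA}) between $O$ and $M$, and the normalization $M(W_1,\dots,W_n)=[1,1]$ coming from the fact that $W$ is an $M$-weighted vector.

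First I would observe that when every input coincides with $X$, the reordering with respect to the admissible order $\leq$ is trivial: $X_{(i)}=X$ for each $i=1,\dots,n$. Hence, directly from the definition,
\[
IV\text{-}GOWA(X,\dots,X)=M(O(W_1,X),\dots,O(W_n,X)).
\]
Next, I would apply the commutation identity (\ref{eqn:propOWA}) with $X_i:=W_i$ and $Y:=X$, obtaining
\[
M(O(W_1,X),\dots,O(W_n,X))=O(M(W_1,\dots,W_n),X).
\]
Finally, since $W$ is an $M$-weighted vector, $M(W_1,\dots,W_n)=[1,1]$, so the right-hand side becomes $O([1,1],X)$, and by the standing hypothesis on $O$ this equals $X$.

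There is essentially no obstacle here: the statement is a direct consequence of the three conditions built into the definition of $IV\text{-}GOWA$, and the only mild point to mention is that the permutation $(\cdot)$ has no effect on constant tuples regardless of which admissible order is chosen. The proof will therefore be a short chain of three equalities, each justified by one of the assumptions.
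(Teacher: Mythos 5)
Your proof is correct and is precisely the argument the paper intends: the paper's own proof is the one-line remark that the claim ``follows from the properties demanded to $M$, $O$ and $W$,'' and your three-step chain (trivial permutation on a constant tuple, the commutation identity (\ref{eqn:propOWA}) with $Y=X$, then $M(W_1,\dots,W_n)=[1,1]$ and $O([1,1],X)=X$) is exactly the intended unwinding of those properties.
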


\begin{proof} It follows  from the properties demanded to $M$, $O$ and $W$. \end{proof}

\begin{proposition}
Let $M$ and $O$ be as in Definition \ref{def:OWAIV}, and $W \in L([0,1])^n$ be an $M$-weighted vector. Then, the function $IV-GOWA$ defined in terms of $M$, $O$ and $W$ is an interval-valued aggregation function with respect to $\le$.
\end{proposition}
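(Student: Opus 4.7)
The plan is to verify the two defining conditions of an interval-valued aggregation function: the boundary conditions and monotonicity with respect to $\leq$.

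For the boundary conditions, note that $IV\text{-}GOWA([0,0],\dots,[0,0])=[0,0]$ and $IV\text{-}GOWA([1,1],\dots,[1,1])=[1,1]$ follow directly from Proposition~\ref{idem}, since $[0,0]$ and $[1,1]$ are themselves elements of $L([0,1])$ and idempotency gives the value of the operator on constant tuples.

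For monotonicity, I would fix an index $k\in\{1,\dots,n\}$ and suppose that $X_k\leq X_k'$ in the admissible order, keeping all other arguments unchanged; write $(X_1,\dots,X_n)$ and $(X_1',\dots,X_n')$ for the two input tuples, so that $X_i\leq X_i'$ for every $i$. Let $(\cdot)$ and $(\cdot)'$ be the permutations that sort each tuple in decreasing order with respect to $\leq$. The key combinatorial step is the classical fact that, for any linear order, componentwise ordering of two tuples is preserved after sorting in decreasing order, that is, $X_{(i)}\leq X_{(i)}'$ for every $i=1,\dots,n$. A quick way to justify this is to observe that $X_{(i)}$ is the $\leq$-maximum of the set $\{X_j : j\in S\}$ over all subsets $S\subseteq\{1,\dots,n\}$ of size $n-i+1$, and the analogous formula holds for $X_{(i)}'$; since the set $\{X_j' : j\in S\}$ dominates $\{X_j : j\in S\}$ componentwise, the corresponding maxima are ordered in the same way.

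Once this sorting lemma is in hand, monotonicity of $IV\text{-}GOWA$ follows routinely. Indeed, by $(O1)$ and $(O4)$ the overlap $O$ is monotone in its second argument, hence $O(W_i,X_{(i)})\leq O(W_i,X_{(i)}')$ for each $i$; and by the monotonicity of the interval-valued aggregation $M$ with respect to $\leq$ (note that we need $\leq$ to extend $\leq_{Pr}$, as stipulated in the definition of interval-valued aggregation function, so that this monotonicity applies), we obtain
\[
IV\text{-}GOWA(X_1,\dots,X_n)\leq IV\text{-}GOWA(X_1',\dots,X_n').
\]
The main obstacle in this argument is the sorting lemma, since one has to verify that ordering via a general admissible order $\leq$ preserves componentwise comparisons; but because $\leq$ is linear, the usual combinatorial proof with maxima over subsets goes through without change, and the proof is complete.
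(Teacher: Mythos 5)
Your proof follows the same two-step structure as the paper's: the boundary values come from Proposition \ref{idem}, and monotonicity from the monotonicity of $M$ and $O$. You add a genuinely useful piece that the paper leaves implicit, namely the sorting lemma asserting that componentwise domination with respect to a linear order is preserved after both tuples are sorted decreasingly; this does need saying, since the permutations attached to the two tuples may differ. One small slip: your subset formula for the order statistic is incomplete as written --- it should read $X_{(i)}=\min\bigl\{\max\{X_j : j\in S\} \;:\; S\subseteq\{1,\dots,n\},\ |S|=n-i+1\bigr\}$ (or the dual max--min form); since both $\max$ and $\min$ over a linear order are monotone in each entry, the intended argument goes through once the formula is stated correctly.

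There is, however, one step that does not follow as you state it (and the paper's own one-line proof has the same issue). Property $(O4)$ only guarantees that $O$ is increasing in its second argument with respect to the product order $\leq_{Pr}$. The sorting lemma gives you $X_{(i)}\leq X_{(i)}'$ for the admissible order $\leq$, and since $\leq$ strictly refines $\leq_{Pr}$ there exist pairs with $X_{(i)}\leq X_{(i)}'$ but $X_{(i)}\not\leq_{Pr}X_{(i)}'$, for which $(O4)$ yields no comparison between $O(W_i,X_{(i)})$ and $O(W_i,X_{(i)}')$. The conclusion really requires the additional hypothesis that $O$, like $M$, is increasing with respect to $\leq$ itself; the paper's proof silently assumes this, and your derivation of it ``by $(O1)$ and $(O4)$'' is the precise point where the argument breaks. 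Everything else in your write-up is sound.
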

\begin{proof} First of all, note that, by Proposition \ref{idem}, it follows that $$IV{-}GOWA( [0,0],\dots,[0,0] ) = [0,0]$$ and $$IV{-}GOWA([1,1],\dots,[1,1]) = [1,1]$$
Moreover, the monotonicity of $IV{-}GOWA$ follows from the monotonicity with respect to $\le$ of both $M$ and $O$.
\end{proof}

\begin{proposition}
Consider $i_0 \in \{1,\dots,n\}$. Then, in the setting of Definition \ref{def:OWAIV}, assume that $M([0,0],\dots,$ $[0,0],X,[0,0],\dots,[0,0])=X$. Consider the vector $W$ 
defined by:
\[
W_i=\begin{cases} [1,1] &\text{ if } i=i_0 \ ;
\\
[0,0] &\text{ otherwise.}
\end{cases}
\]
Then it holds that:
\begin{enumerate}
\item $W$ is an $M$-weighted vector.
\item $IV{-}GOWA (X_1,\dots,X_n)=X_{(i_0)}$, where $X_{(i)}$ denotes the $i$-th largest interval  $X_1,\dots,X_n$,  with respect to an admissible total order.
\end{enumerate}
\end{proposition}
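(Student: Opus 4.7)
The plan is to verify the two claims directly from the hypotheses, since no nontrivial structural argument is needed.

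For part 1, I would simply note that the hypothesis on $M$ states that placing $X$ in coordinate $i_0$ (and $[0,0]$ elsewhere) outputs $X$. Taking $X = [1,1]$ gives $M(W_1,\dots,W_n) = [1,1]$, which is precisely Equation (\ref{eq-wv}), so $W$ is an $M$-weighted vector.

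For part 2, the key steps, in order, are: (a) unfold the definition of $IV{-}GOWA$; (b) for every $i \neq i_0$, use property $(O2)$ to conclude $O([0,0], X_{(i)}) = [0,0]$, since $[0,0] \cdot X_{(i)} = [0,0]$; (c) for $i = i_0$, use the standing assumption of Definition \ref{def:OWAIV} that $O([1,1], X) = X$ to conclude $O([1,1], X_{(i_0)}) = X_{(i_0)}$; (d) apply the hypothesis on $M$ once more, with $X = X_{(i_0)}$, to collapse $M([0,0],\dots,[0,0],X_{(i_0)},[0,0],\dots,[0,0])$ to $X_{(i_0)}$.

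There is no real obstacle: the proposition is a direct consequence of the way the weights have been chosen to make $W$ a ``selection'' vector. The only point that might warrant an explicit mention is step (b), where one needs to observe that $(O2)$ yields $O([0,0], X_{(i)}) = [0,0]$ regardless of $X_{(i)}$, so that the permutation $(\cdot)$ (which depends on the admissible order $\leq$) does not enter in any subtle way. Once this is in place, the chain of equalities
\[
IV{-}GOWA(X_1,\dots,X_n) = M\bigl(O(W_1,X_{(1)}),\dots,O(W_n,X_{(n)})\bigr) = M\bigl([0,0],\dots,X_{(i_0)},\dots,[0,0]\bigr) = X_{(i_0)}
\]
closes the argument.
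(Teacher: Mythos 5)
Your proof is correct and is exactly the direct verification the paper has in mind; the paper itself only writes ``It is immediate, following from the Definition \ref{def:OWAIV}'', and your steps (using $(O2)$ to annihilate the $[0,0]$-weighted terms, $O([1,1],X)=X$ for the $i_0$-th term, and the hypothesis on $M$ twice, once with $X=[1,1]$ and once with $X=X_{(i_0)}$) are precisely the details being elided.
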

\begin{proof}
It is immediate, following from the Definition \ref{def:OWAIV}. \end{proof}

\begin{proposition}
Consider  
$$M(X_1,\dots,X_n)=\left [\min \left (1,\sum\limits_{i=1}\limits^n \underline{X} _i \right ), \min \left  (1,\sum\limits_{i=1}\limits^n \overline{X} _i \right) \right]$$
and $O(X,Y)=XY$. Then it holds that:
\begin{enumerate}
\item $W=([\frac{1}{n},\frac{1}{n}],\dots,[\frac{1}{n},\frac{1}{n}])$ is a $M$-weighted vector.
\item The corresponding $IV{-}GOWA$ operator is given by:
\[
IV{-}GOWA(X_1,\dots,X_n)=\left [\frac{1}{n} \sum\limits_{i=1}\limits^n \underline{X}_i,\frac{1}{n} \sum\limits_{i=1}\limits^n \overline{X}_i \right]
\]
\end{enumerate}
\end{proposition}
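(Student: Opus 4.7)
The argument is a direct substitution into Definition \ref{def:OWAIV}, so there is no real obstacle: the only care needed is to track the endpoints of the intervals and to check that the outer clipping in $M$ stays inactive.

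For item (1), I simply evaluate $M$ at $W=([1/n,1/n],\ldots,[1/n,1/n])$. Since each $W_i$ is degenerate with $\underline{W_i}=\overline{W_i}=1/n$, both coordinate sums satisfy
\[
\sum_{i=1}^n \underline{W_i} \;=\; \sum_{i=1}^n \overline{W_i} \;=\; n\cdot\tfrac{1}{n} \;=\; 1,
\]
hence $M(W_1,\ldots,W_n)=[\min(1,1),\min(1,1)]=[1,1]$, so $W$ is an $M$-weighted vector.

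For item (2), I unfold the definition step by step. Using $O(X,Y)=XY$ (interval product) together with $W_i=[1/n,1/n]$, the interval product formula gives
\[
O(W_i,X_{(i)})=W_iX_{(i)}=\left[\tfrac{1}{n}\,\underline{X_{(i)}},\;\tfrac{1}{n}\,\overline{X_{(i)}}\right].
\]
Plugging these $n$ intervals into $M$ yields
\[
M\bigl(O(W_1,X_{(1)}),\ldots,O(W_n,X_{(n)})\bigr)=\left[\min\!\Bigl(1,\sum_{i=1}^n\tfrac{1}{n}\underline{X_{(i)}}\Bigr),\;\min\!\Bigl(1,\sum_{i=1}^n\tfrac{1}{n}\overline{X_{(i)}}\Bigr)\right].
\]
Since $\underline{X_{(i)}},\overline{X_{(i)}}\in[0,1]$, each sum of $n$ terms $\tfrac{1}{n}\underline{X_{(i)}}$ (resp.\ $\tfrac{1}{n}\overline{X_{(i)}}$) is bounded above by $\tfrac{1}{n}\cdot n=1$, so the outer $\min(1,\cdot)$ is inactive and may be dropped. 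Finally, because $(\cdot)$ is a permutation of $\{1,\ldots,n\}$, one has $\sum_{i=1}^n \underline{X_{(i)}}=\sum_{i=1}^n \underline{X_i}$ and similarly for the upper endpoints. Substituting delivers exactly
\[
IV\text{-}GOWA(X_1,\ldots,X_n)=\left[\tfrac{1}{n}\sum_{i=1}^n \underline{X_i},\;\tfrac{1}{n}\sum_{i=1}^n \overline{X_i}\right],
\]
as claimed.

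The only non-mechanical observation worth highlighting is the bound that disables the outer $\min(1,\cdot)$; without it one could not simply rewrite the expression as the arithmetic-mean interval. Everything else is bookkeeping with the product of a degenerate interval and the invariance of a finite sum under permutation.
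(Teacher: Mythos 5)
Your proof is correct and takes the same direct-verification route the paper intends---the paper's own proof of this proposition is simply ``It is immediate.'' Your explicit observation that the outer $\min(1,\cdot)$ is inactive (because $\frac{1}{n}\sum_{i=1}^n \underline{X_i}\le 1$ and likewise for the upper endpoints) is the one detail genuinely worth spelling out, and you handle it, together with the permutation invariance of the sums, correctly.
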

     \begin{proof}
     It is immediate.
      \end{proof}


\section{Conclusions and future research}

We have presented a study about interval-valued overlaps functions, showing conditions to ensure their  representability and  discussing the important properties of   migrativity and
homogeneity. We have also introduced the notion of semi $o$-representability. We have also discussed a method to build interval-valued OWA operators when considering interval-valued weighting vectors.

As a further work,  we will investigate ordinal sums and  additive generators of interval-valued overlap functions, in the line of what was done in
\cite{DB14,DB15}, aiming at practical applications. We also will study  how to characterize the homogenous interval-valued aggregation functions.


\section*{Acknowledgments}  This work was partially supported  by the Spanish Ministry of Science and Technology under the project  TIN2013-40765-P and   by the Brazilian 
funding agency CNPQ under Processes  307681/2012-2, 306970/2013-9, 406503/2013-3, 481283/2013-7 and 232827/2014-1.


\end{document}